 \pgfplotsset{compat=1.11} 
\tikzset{
    -Latex,auto,node distance =1 cm and 1 cm,semithick,
    state/.style ={circle, draw, minimum width = 1 cm},
    point/.style = {circle, draw, inner sep=0.04cm,fill,node contents={}},
    bidirected/.style={Latex-Latex,dashed},
    el/.style = {inner sep=2pt, align=left, sloped}
}
\DeclareMathOperator*{\argmin}{arg\,min}
\definecolor{DarkGreen}{rgb}{0.1,0.5,0.1}
\definecolor{DarkRed}{rgb}{0.5,0.1,0.1}
\definecolor{DarkBlue}{rgb}{0.1,0.1,0.5}
\definecolor{Gray}{rgb}{0.2,0.2,0.2}
\newtheorem{theorem}{Theorem}
\newtheorem{corollary}[theorem]{Corollary}
\newtheorem{lemma}[theorem]{Lemma}
\newtheorem{proposition}[theorem]{Proposition}
\theoremstyle{definition}
\newtheorem{definition}{Definition}
\newtheorem{assumption}{Assumption}
\newtheorem*{remark}{Remark}
\newtheorem{example}{Example}
\newtheorem*{property*}{Property}
\newcommand{\thetaPO}{\theta_{\mathrm{PO}}}
\newcommand{\thetaSL}{\theta_{\mathrm{SL}}}
\newcommand{\cF}{\mathcal{F}}
\newcommand{\cW}{\mathcal{W}}
\newcommand{\cX}{\mathcal{X}}
\newcommand{\cU}{\mathcal{U}}
\newcommand{\PR}{\mathrm{PR}}
\newcommand{\DPR}{\mathrm{R}}
\newcommand{\R}{\mathrm{R}}
\newcommand{\distance}{\mathrm{dist}}
\newcommand{\PP}{\mathrm{P}}
\tikzset{
    -Latex,auto,node distance =1 cm and 1 cm,semithick,
    state/.style ={circle, draw, minimum width = 0.8 cm},
    point/.style = {circle, draw, inner sep=0.04cm,fill,node contents={}},
    bidirected/.style={Latex-Latex,dashed},
    el/.style = {inner sep=2pt, align=left, sloped}
}
\newcommand{\cD}{\mathcal D}
\newcommand{\DBase}{{\cD_{\mathrm{orig}}}}
\DeclareMathOperator*{\argsup}{arg\,sup}
\DeclareMathOperator*{\argmax}{arg\,max}
\DeclareMathOperator*{\E}{\mathbb{E}}
\let\Pr\relax
\DeclareMathOperator*{\Pr}{\mathbb{P}}
\renewenvironment{abstract}
 {\small
  \begin{center}
  \bfseries \abstractname\vspace{-.5em}\vspace{0pt}
  \end{center}
  \list{}{%
    \setlength{\leftmargin}{20mm}
    \setlength{\rightmargin}{\leftmargin}%
  }
  \item\relax}
 {\endlist}
\title{Performative Power\footnote{Authors in alphabetical order.}\\ 
}
\newcommand{\printfnsymbol}[1]{%
  \textsuperscript{\@fnsymbol{#1}}%
}
\author[1]{Moritz Hardt}
\author[2]{Meena Jagadeesan}
\author[1]{Celestine Mendler-Dünner}
\affil[1]{Max-Planck Institute for Intelligent Systems, Tübingen}
\affil[2]{University of California, Berkeley}
\date{}                     
\begin{document}

\maketitle

\begin{abstract}
We introduce the notion of performative power, which measures the ability of a firm operating an algorithmic system, such as a digital content recommendation platform, to cause 
change in a population of participants. We relate performative power to the economic study of competition in digital economies. Traditional economic concepts struggle with identifying anti-competitive patterns in digital platforms not least due to the complexity of market definition. In contrast, performative power is a causal notion that is identifiable with minimal knowledge of the market, its internals, participants, products, or prices.
Low performative power implies that a firm can do no better than to optimize their objective on current data. In contrast, firms of high performative power stand to benefit from steering the population towards more profitable behavior. We confirm in a simple theoretical model that monopolies maximize performative power. A firm's ability to personalize increases performative power, while competition and outside options decrease performative power. On the empirical side, we propose an observational causal design to identify performative power from discontinuities in how digital platforms display content. This allows to repurpose causal effects from various studies about digital platforms as lower bounds on performative power. Finally, we speculate about the role that performative power might play in competition policy and 
antitrust enforcement
in digital marketplaces.
\end{abstract}

\section{Introduction}

Digital platforms pose a well-recognized challenge for antitrust enforcement. Traditional market definitions, along with associated notions of competition and market power, map poorly onto digital platforms. A core challenge is the difficulty of precisely modeling the interactions between the market participants, products, and prices.
An authoritative report, published by the \cite{stigler19}, details the many challenges associated with digital platforms, among them: 
\emph{“Pinpointing the locus of competition can also be challenging because the markets are multisided and often ones with which economists and lawyers have little experience. This complexity can make market definition another hurdle to effective enforcement.”}
Published the same year, a comprehensive report from the European Commission \citep{cremer2019competition} calls for 
\emph{“less emphasis on analysis of market definition, and more emphasis on theories of harm and identification of anti-competitive strategies.” }

Our work responds to this call by developing a normative and technical proposal for reasoning about power in digital economies, while relaxing the reliance on market definition. 
Our running example is a digital content recommendation platform. The platform connects content creators with viewers, while monetizing views through digital advertisement. Key to the business strategy of a firm operating a digital content recommendation platform is its ability to predict revenue for content that it recommends or ranks highly. 
Often framed as a supervised learning task, the firm trains a statistical model on observed data to predict some proxy of revenue, such as clicks, views, or engagement. Better predictions enable the firm to more accurately identify content of interest and thus increase profit.  

A second way of increasing profit is more subtle. The platform can use its predictions to \emph{steer} participants towards modes of consumption and production that are easier to predict and monetize. For example, the platform could reward consistency in the videos created by content creators, so that the audience and the popularity of their videos becomes more predictable. 
Similarly, the platform could recommend addictive content to viewers, appealing to behavioral weaknesses in order to drive up viewer engagement. How potent such a strategy is depends on the extent to which the firm is able to steer participants, which we argue reveals a salient power relationship between the platform and its participants.

\subsection{Our contribution}

We introduce the notion of \emph{performative power} that quantifies a firm’s ability to steer a population of participants. We argue that the sensitivity of participant behavior to algorithmic changes in the platform provides an important indicator of the firm's power.
Performative power is a causal statistical notion that directly quantifies how much participants change in response to actions by the platform, such as updating a predictive model. In doing so it avoids market specifics, such as the number of firms involved, products, and monetary prices. Neither does it require a competitive equilibrium notion as a reference point. Instead, it focuses on where rubber meets the road: the algorithmic actions of the platform and their causal powers. 

We first investigate the role of performative power in optimization. In particular, we build on recent developments in performative prediction~\citep{PZMH20} to articulate the fundamental difference between learning and steering in prediction. We show that under low performative power, a firm cannot do better than standard supervised learning on observed data. Intuitively, this means the firm optimizes its loss function \emph{ex-ante} on data it observes without the ability to steer towards data it would prefer. We interpret this optimization strategy as analogous to the firm being a price-taker, an economic condition that arises under perfect competition in classical market models. 
We contrast this optimization strategy with a firm that performs \emph{ex-post} optimization and benefits from steering towards data it prefers. Formally, we provide an upper-bound on the distance between the two solution concepts in terms of performative power.

Then, to study the qualitative properties of performative power we consider the concrete algorithmic market model of strategic classification. Strategic classification models participants as best-responding agents that change their features rationally in response to a predictor with the goal of achieving a better prediction outcome. 
In this simple setting, we show that the willingness of participants to invest in changing their features governs the performative power of the firm. We investigate the role of different economic factors by extending the standard model to incorporate competing firms and outside options. We highlight two key observations:
\begin{itemize}
\item A \emph{monopoly} firm can have significant performative power. In this case, performative power is derived because participants are willing to incur a cost up to the utility of using the service in order to adjust to the firm’s predictor. Moreover, performative power is maximized if a monopoly firm has the ability to personalize decisions to individual users.  
\item Performative power decreases in the presence of \emph{competition} and \emph{outside options}. In particular, when firms compete for participants, offering services that are perfect substitutes for each other, then even two firms can lead to zero performative power. This result stands in analogy with the classical Bertrand competition. 
\end{itemize}

On the empirical side, we propose a causal design to identify performative power in the context of a recommender system arranging content into display slots. This design, we call \textit{discrete display design (DDD)}, establishes a connection between performative power and the causal effect of display position on consumption. To derive a lower bound on performative power, DDD constructs a hypothetical algorithmic action that aggregates the causal effects of display position across the population. This allows us to repurpose reported causal effects of display position as lower bounds on performative power. It also charts out a concrete empirical strategy for understanding power in digital economies, both experimentally and observationally.

Finally, we examine the potential role of performative power in competition policy. We contrast performative power with traditional measures of market power, describe how performative power can capture complex behavioral patterns, and discuss the role that performative power might play in ongoing antitrust debates. 

\subsection{Related work}

Our notion of performative power builds on the development of performativity in prediction by \cite{PZMH20}. 
Performativity captures that the predictor can influence the data-generating process, a dependency ruled out by the traditional theory of supervised learning. A growing line of work on performative prediction, e.g., \citep{mendler20stochasticPP, drusvyatskiy2022stochastic, izzo2021learn, dong2021approximate, miller2021outside, brown2022performative, li2021state, raydecision,JZM22, wood22}, has studied different optimization challenges and solution concepts in performative prediction. Rather than viewing performative effects as an additional challenge for the learning algorithm, we argue that performativity reveals a salient power relationship between the decision maker and the population. From an optimization perspective, our work demonstrates that sufficiently high performative power is necessary for performative optimization approaches to achieve lower risk compared with standard supervised learning.

The \textit{strategic classification} setup we use for our case study was proposed in~\citep{bruckner12pred, hardt16strat} and is closely related to a line of work in the economics community~\citep{frankel2022improving, Ball2020, HG20, FK19}. A long line of work on strategic classification makes the assumption that performative effects are the result of individuals manipulating their features so as to best respond to the deployment of a predictive model. The focus has been on describing participant behavior in response to a single firm acting in isolation. Our extensions incorporate additional market factors into the model, such as outside options or the choice between competing firms, which we believe are helpful for gaining a better understanding of strategic interactions in digital economies. Beyond the case of a single classifier, recently, \cite{narang2022multiplayer} and \citet{piliouras22} analyzed settings with multiple firms that simultaneously apply retraining algorithms in performative environments. Similar to our analysis in Section~\ref{sec:optimization}, these works study the solution concept of a Nash equilibrium, however, with a focus on proving convergence to equilibrium solutions, whereas we are interested in how these equilibria interact with performative power. \citet{GZKZ21} study another model of feedback loops arising from competition between machine learning models.

There is extensive literature on the topic of competition on digital platforms that we do not attempt to survey here. For starting points, see, for example, recent work by~\citet{bergemann2022data}, a survey by \citet{calvano2021market}, a discussion by~\citet{parker2020digital}, the reports already mentioned~\citep{stigler19, cremer2019competition}, as well as a macroeconomic perspective on the topic~\citep{syverson19mp}.

\section{Performative power}

Fix a set $\cU$ of participants interacting with a designated firm, where each $u\in\cU$ is associated with a data point~$z(u)$. Fix a metric $\distance(z,z')$ over the space of data points. Let $\cF$ denote the set of actions a firm can take. We think of an action~$f\in\cF$ as a predictor that the firm can deploy at a fixed point in time. For each participant~$u\in\cU$ and action~$f \in\cF$, we denote by $z_f(u)$ the potential outcome random variable representing the counterfactual data of participant~$u$ if the firm were to take action~$f$.
\begin{definition}[Performative Power]
\label{def:performativepower}
Given a population $\cU$, an action set $\cF$, potential outcome pairs $(z(u),z_f(u))$ for each unit~$u\in\cU$ and action~$f\in\cF$, and a metric $\distance$ over the space of data points, we define the 
\emph{performative power} of the firm as
\[
\PP:=\sup_{f \in \cF}\; \frac 1 {|\cU|} \sum_{u \in \cU} \E\left[\distance\left(z(u),z_f(u)\right)\right]\,,
\]
where the expectation is over the randomness in the potential outcomes.
\end{definition}
The expression inside the supremum generalizes an average treatment effect, corresponding to scalar valued potential outcomes and the absolute value as metric. We could generalize other causal quantities such as heterogeneous treatment effects, but this avenue is not subject of our paper. The definition takes a supremum over possible actions a firm can take at a specific point in time. We can therefore lower bound performative power by estimating the causal effect of any given action $f\in\cF.$ 

Having specified the sets~$\cF$ and~$\cU$, estimating performative power amounts to causal inference involving the potential outcome variables~$z_f(u)$ for unit $u\in\cU$ and action $f\in\cF$. In an observational design, an investigator is able to identify performative power without an experimental intervention on the platform. We propose and apply one such observational design in Section~\ref{sec:DDD}. In an experimental design, the investigator deploys a suitably chosen action to estimate the effect. Neither route requires understanding the specifics of the market in which the firm operates. It is not even necessary to know the firm's objective function, how it optimizes its objective, and whether it successfully achieves its objective. In practice, the dynamic process that generates the potential outcome~$z_f(u)$ may be highly complex, but this complexity does not enter the definition. Consequently, the definition applies to complex multisided digital economies that defy mathematical specification. 
To make this abstract concept of performative power more concrete, we instantiate it in a concrete example.

\subsection{Running example: Digital content recommendation}

Consider a digital content recommendation platform, such as the video sharing services YouTube or Twitch. The platform aims to recommend channels that generate high revenue, personalized to each viewer. Towards this goal, the platform collects data to build a predictor~$f$ for the value of a channel~$c$ to a viewer with preferences~$p$. Let $x = (x_c, x_p)$ be the features used for the prediction task that capture attributes $x_c$ of the channel and the attributes $x_p$ of the  viewer preferences. Let~$y$ be the target variable, such as \emph{watch time}, that acts as a proxy for the monetary value of showing a channel to a specific viewer. For concreteness, take the supervised learning loss $\ell(f(x), y)$ incurred by a predictor~$f$ to be the squared loss $(f(x) - y)^2$. 

When defining performative power, participants could either be viewers or content creators. The definition is flexible and applies to both. 
By selecting the units $\cU$, which features to include in the data point $z$, and how to specify the distance metric $\distance$, we can pinpoint the power relationship we would like to investigate.

\paragraph{Content creators.} The predictor~$f$ can affect the type of videos that content creators stream on their channels. For example, content creators might strategically adjust various features of their content relevant for the predicted outcome, such as the length, type or description of their videos, to improve their ranking. Thus, by changing how it predicts the monetary value of a channel, the platform can induce changes in the content on the channel. To measure this source of power, we let the participants $\cU$ be content creators and suppose that each content creator $u\in\cU$ maintains a channel of videos. Let the data point $z(u)$ correspond to features $x_c$ characterizing the channel $c$ created by content creator $u$. Let $\distance$ be a metric over features of content. The resulting instantiation of performative power measures the changes in content induced by potential implementations $\cF$ of the prediction function and thus captures a power relationship between the platform and the content creators. In Section \ref{sec:strategic}, we investigate this form of performative power from a theoretical perspective by building on the setup of \textit{strategic classification}.

\paragraph{Viewers.} The predictor $f$ can shape the consumption patterns of viewers. In particular, viewers tend to follow recommendations when deciding what content to consume (e.g. \citep{ursu2018}). Thus, by changing which content it recommends to a user, the platform can induce changes in the target variable: how much time the user spends watching content on a given channel. Let's suppose that we wish to investigate the effect of the predictor on viewer consumption of a certain genre of content (e.g. radical content). To formalize this source of power, we let the units $\cU$ be viewers. Let the data point $z(u)$ correspond to how long the viewer $u$ spends watching content in the genre of interest. 
Let $\distance(z, z') = |z - z'|$ capture the difference in watch time. The resulting instantiation of performative power measures the changes in consumption of a given genre of content induced by a set of prediction functions $\cF$ the firm could implement. In Section~\ref{sec:DDD}, we propose an observational design to identify this quantity by establishing a formal connection to the causal effect of display position.

\section{Learning versus steering}
\label{sec:optimization}

Performative power enters the firm's optimization problem and has direct consequences for how a firm can achieve low risk. Instead of identifying the best action~$f$ while treating data as fixed, high performative power enables the firm to \emph{steer} the population towards data that it prefers. 
In the following, we elucidate the role of performative power in the optimization strategy of a firm and the equilibria attained in an economy of predictors.

\subsection{Optimization strategies}

We focus on predictive accuracy as the optimization objective of the firm. Hence, the goal of the firm is to choose a predictive model $f$ that suffers small loss $\ell(f(x),y)$ measured over instances $(x,y)$. To elucidate the role of steering we distinguish between the \emph{ex-ante} loss $\ell(f(x(u)),y(u))$ and the \emph{ex-post} loss $\ell(f(x_f(u)),y_f(u)).$ The former describes the loss that the firm can optimize when building the predictor. The latter describes the loss that the firm observes after deploying $f$. More formally, the  \textit{ex-post risk} that the firm suffers after deploying $f$ on a population $\cU$ is given by
\begin{equation}
\label{eq:expost}
 \frac{1}{|\cU|} \sum_{u \in \cU}\ell(f(x_f(u)), y_f(u))\,.
\end{equation}
Expression~\eqref{eq:expost} is an instance of what \cite{PZMH20} call \emph{performative risk} of a predictor. That is the loss a predictor incurs on the distribution over instances it induces. To simplify notation we adopt their conceptual device of a distribution map: let $\cD(\theta)$ map a predictive model, characterized by model parameters $\theta$, to a distribution over data instances. 

To express our setting within the framework of performative prediction, we assume the predictive model $f$ is parameterized by a parameter vector  $\theta\in\Theta$. We let a data instance correspond to $z(u)=(x(u),y(u))$ for $u\in\cU$ so we can capture performativity in the features as well as in the labels. Then, the \textit{aggregate distribution} over data $\cD(\theta)$  corresponds to the distribution over the potential outcome variable $z_{\theta}(u)$ after the firm takes action $\theta$, where the randomness comes from $u$ being uniformly drawn from $\cU$ as well as randomness in the potential outcomes. The firm's ex-post risk~\eqref{eq:expost} from deploying predictor $f_{\theta}$ corresponds to the performative risk:
\[
\PR(\theta) := \E_{z \sim \cD(\theta)}[\ell(\theta;\,z)] \]
where the loss typically corresponds to the mismatch between the predicted label and the true label: $\ell(\theta;\,z)=\ell(f_\theta(x), y)$ for $z=(x,y).$

In performative risk minimization, observe that $\theta$ arises in two places in the objective: in the distribution $\cD({\theta})$ and in the loss $\ell(\theta;\,z)$. Thus, for any choice of model $\phi$, we can decompose the performative risk $\PR(\theta)$ as: 
\begin{equation}
\label{eq:decomposition}
\PR(\theta)=\DPR(\phi,\theta) + \left(\DPR(\theta,\theta)- \DPR(\phi,\theta)\right)
\end{equation}
where
$\DPR(\phi,\theta):=\E_{z\sim\cD(\phi)}\ell(\theta;\,z)\,$ denotes the loss of a model $\theta$ on the distribution $\cD(\phi)$. 
This tautology highlights the difference between learning and steering and we differentiate between the following two optimization approaches:

\paragraph{Ex-ante optimization.} Ex-ante optimization  focuses on optimizing the first term in the decomposition~\eqref{eq:decomposition}. For any $\phi$, the resulting minimizer can be computed statistically:  
\[\thetaSL:=\arg\min_{\theta\in\Theta}\, \DPR(\phi,\theta).\] 
Let $f_\phi$ be any previously chosen model, then employing supervised learning on historical data sampled from $\cD(\phi)$ corresponds to what we call ex-ante optimization. 

\paragraph{Ex-post optimization.} In contrast to ex-ante optimization, \textit{ex-post optimization} accounts for the impact of the model on the distribution. It trades-off the two terms in \eqref{eq:decomposition}, and directly optimizes the performative risk \[\thetaPO:=\arg\min_{\theta\in\Theta} \,\PR(\theta).\]
Solving this problem exactly, and finding the performative optimum $\thetaPO$ requires optimization over the distribution map $\cD(\theta)$. 

In the context of digital content recommendation, ex-ante optimization corresponds to training the model  $\theta$ on historical data collected by the platform,  whereas ex-post optimization selects $\theta$ based on randomized experiments, A/B testing or explicit modeling of $\cD(\theta)$.
It holds that $\PR(\thetaPO)\leq \PR(\thetaSL)$, because in ex-post optimization the firm can choose to steer the population towards more predictable behavior. High ex-post predictability may be an objective worth pursuing for firms relying on predictive optimization~\citep{shmueli20}, as speculated on in popular science writing~\citep{ward22}. 

\begin{remark}[Generalizing to other objectives]
Note that we focus on predictive accuracy as an objective function. Nonetheless, the conceptual distinction between learning and steering applies to \textit{general optimization objectives}. Ex-ante optimization corresponds to optimizing on historical data, whereas ex-post optimization corresponds to implicitly or explicitly optimizing over the counterfactuals. 
\end{remark}

\subsection{Gain of ex-post optimization is bounded by a firm's performative power}

We show that the gain of ex-post optimization over ex-ante optimization can be bounded by the firm's performative power with respect to the set of actions $\Theta$ and the data vector $z=(x,y).$ Intuitively, if the firm's performative power is low, then the distributions $\cD(\theta)$ and $\cD(\phi)$  for any $\theta,\phi\in\Theta$ are close to one another. This distributional closeness, coupled with a regularity assumption on the loss, means that the second term in \eqref{eq:decomposition} should be small. Thus, using the ex-ante approach of minimizing the first term produces a near-optimal ex-post solution, as we demonstrate in the following result: 
\begin{proposition}
\label{prop:SL}
Let $\PP$ be the performative power of a firm with respect to the action set $\Theta$. Let $L_z$ be the Lipschitzness of the loss in $z$ with respect to the metric $\distance$. Let $\thetaPO$ be the ex-post solution and $\thetaSL$ be the ex-ante solution computed from $\cD(\phi)$ for any past deployment $\phi \in \Theta$. Then, we have that:
\[ \PR(\thetaSL) \le \PR(\thetaPO) + 4L_z \PP.   \]
If $\ell$ is $\gamma$-strongly convex, we can further bound the distance between $\thetaSL$ and $\thetaPO$ in parameter space as:
\[\|\thetaSL- \thetaPO\|_2 \le \sqrt{\frac{8 L_z \PP}{\gamma}}.\]
\end{proposition}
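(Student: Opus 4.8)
The plan is to isolate a single \emph{key lemma} translating low performative power into closeness of the decoupled risks $\DPR$, and then feed it into the decomposition~\eqref{eq:decomposition} together with the optimality properties of $\thetaSL$ and $\thetaPO$. The lemma I would prove is that, for every $\theta,\phi\in\Theta$,
\[ |\DPR(\theta,\theta)-\DPR(\phi,\theta)| \le 2L_z\,\PP. \]
To establish it I unfold the aggregate distribution map: since $\cD(\psi)$ is the law of $z_\psi(u)$ for $u$ uniform on $\cU$, we have $\DPR(\psi,\theta)=\frac{1}{|\cU|}\sum_{u}\E[\ell(\theta;z_\psi(u))]$, so the left-hand difference is a per-unit average of $\E[\ell(\theta;z_\theta(u))-\ell(\theta;z_\phi(u))]$. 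The $L_z$-Lipschitzness of $\ell$ in $z$ bounds each term pointwise by $L_z\,\E[\distance(z_\theta(u),z_\phi(u))]$. The one subtlety is that performative power measures deviation from the \emph{baseline} $z(u)$ rather than between two actions, so I route through it via the triangle inequality $\distance(z_\theta(u),z_\phi(u))\le \distance(z_\theta(u),z(u))+\distance(z(u),z_\phi(u))$; averaging over $u$ and invoking Definition~\ref{def:performativepower} once for $\theta$ and once for $\phi$ produces the factor $2\PP$.

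For the risk bound I combine \eqref{eq:decomposition} with the lemma to replace $\PR$ by $\DPR(\phi,\cdot)$ up to an additive $2L_z\PP$. Concretely, $\PR(\thetaSL)\le \DPR(\phi,\thetaSL)+2L_z\PP$; optimality of $\thetaSL$ for $\DPR(\phi,\cdot)$ gives $\DPR(\phi,\thetaSL)\le \DPR(\phi,\thetaPO)$; and a second application of the lemma, using $\PR(\thetaPO)=\DPR(\thetaPO,\thetaPO)$, gives $\DPR(\phi,\thetaPO)\le \PR(\thetaPO)+2L_z\PP$. Chaining the three inequalities yields $\PR(\thetaSL)\le \PR(\thetaPO)+4L_z\PP$.

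For the parameter bound, when $\ell$ is $\gamma$-strongly convex in $\theta$ so is $\DPR(\phi,\cdot)$, and since $\thetaSL$ is its minimizer, strong convexity at the minimizer gives $\frac{\gamma}{2}\|\thetaPO-\thetaSL\|_2^2\le \DPR(\phi,\thetaPO)-\DPR(\phi,\thetaSL)$. I then upper-bound the right-hand side by converting both decoupled terms back to performative risks through the lemma, obtaining $\DPR(\phi,\thetaPO)-\DPR(\phi,\thetaSL)\le \PR(\thetaPO)-\PR(\thetaSL)+4L_z\PP$, and finally drop the nonpositive risk gap using the optimality $\PR(\thetaPO)\le\PR(\thetaSL)$. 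This leaves $\frac{\gamma}{2}\|\thetaPO-\thetaSL\|_2^2\le 4L_z\PP$, which rearranges to the claimed bound.

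The main obstacle is the key lemma: the clean constants in the statement hinge on correctly relating the \emph{pairwise} distribution shift between $\cD(\theta)$ and $\cD(\phi)$ to performative power, which is defined only relative to the fixed baseline $z(u)$. The triangle-inequality detour through $z(u)$ is what produces the factor $2$, and hence the $4L_z\PP$ and $8L_z\PP$ appearing in the statement; it also tacitly requires that both $\theta$ and $\phi$ lie in the action set $\Theta$ over which $\PP$ takes its supremum, which holds since $\thetaSL,\thetaPO\in\Theta$ and $\phi\in\Theta$ is a past deployment. Everything after the lemma is bookkeeping with the decomposition and the two optimality conditions.
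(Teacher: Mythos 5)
Your proposal is correct and follows essentially the same route as the paper: your key lemma bounding $|\DPR(\theta,\theta)-\DPR(\phi,\theta)|$ by $2L_z\PP$ is exactly the paper's combination of its Wasserstein-diameter lemma (proved by the same identity coupling of units and the same triangle inequality through the baseline) with Kantorovich--Rubinstein duality, and the subsequent chaining via the optimality of $\thetaSL$ and $\thetaPO$ and the strong-convexity step match the paper's argument. The only cosmetic difference is that you compare against $\thetaPO$ directly in $\DPR(\phi,\cdot)$ where the paper introduces an auxiliary minimizer $\theta^*$ of $\DPR(\thetaPO,\cdot)$; both yield the same $4L_z\PP$ constant.
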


Proposition~\ref{prop:SL} illustrates that the gain achievable through ex-post optimization is bounded by performative power. Thus, a firm with small performative power cannot do much better than ex-ante optimization and might be better off sticking to classical supervised learning practices instead of engaging with ex-post optimization. 

\subsection{Ex-post optimization in an economy of predictors}
\label{sec:expostmarket}

The result in Proposition \ref{prop:SL} studies the optimization strategy of a single firm in isolation. In this section, we investigate the interaction between the strategies of multiple firms that optimize simultaneously over the same population. We consider an idealized marketplace where $C$ firms all engage in ex-post optimization and we assume all exogenous factors remain constant. Let $\cD(\theta^1, \ldots, \theta^{i-1}, \theta^i, \theta^{i+1}, \ldots, \theta^C)$ be the distribution over $z(u)$ induced by each firm $i\in[C]$ deploying model $f_{\theta^i}$. Let $\ell_i$ denote the loss function chosen by firm $i$. We say a set of predictors $[f_{\theta^1}, \ldots, f_{\theta^C}]$ is a \emph{Nash equilibrium} if and only if no firm has an incentive to unilaterally deviate from their predictor using ex-post optimization: 
\[\theta^i \in \argmin_{\theta\in\Theta}\E_{z \sim \cD(\theta^1, \ldots, \theta^{i-1}, \theta, \theta^{i+1}, \ldots, \theta^C)}[\ell_i(\theta;\,z)].\]
First, we show that at the Nash equilibrium, the suboptimality of each predictor $f_{\theta^i}$ on the induced distribution depends on the performative power of the respective firm. 
\begin{proposition}
\label{prop:equilibrium}
Suppose that the economy is in a Nash equilibrium $(\theta^1, \ldots, \theta^C)$, and firm $i$ has performative power $\PP_i$ with respect to the action set $\Theta$. Let $L_z$ be the Lipschitzness of the loss $\ell_i$ in $z$ with respect to the metric $\distance$. Then, it holds that: 
\[\E_{z \sim \mathcal{D}} [\ell_i(\theta^i;\,z)] \le \min_{\theta} \;\E_{z \sim \mathcal{D}} [\ell_i(\theta;\,z)] + L_z \PP_i\,,  \]
where $\mathcal{D} = \mathcal{D}(\theta^1, \ldots, \theta^C)$ is the distribution induced at the equilibrium. If $\ell_i$ is $\gamma$-strongly convex, then we can also bound the distance between $\theta^i$ and $\arg\min_{\theta\in\Theta} \E_{z \sim \mathcal{D}} [\ell_i(\theta;\,z)]$ in parameter space. 
\end{proposition}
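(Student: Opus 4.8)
The plan is to mirror the argument behind Proposition~\ref{prop:SL}, but to exploit the Nash condition to avoid the triangle-inequality doubling that produced the factor $4$ there. The key object is the coupled transport cost between the distribution firm $i$ induces at equilibrium and the one it would induce by unilaterally deviating. Write $\cD_i(\theta):=\cD(\theta^1,\ldots,\theta^{i-1},\theta,\theta^{i+1},\ldots,\theta^C)$ for the distribution when firm $i$ plays $\theta$ while the remaining firms stay at their equilibrium actions, so that $\cD=\cD_i(\theta^i)$, and for each unit $u$ let $z^{(i)}_\theta(u)$ be the corresponding potential outcome. Since the data observed at equilibrium is exactly the outcome under the equilibrium action, the baseline in the definition of $\PP_i$ is $z(u)=z^{(i)}_{\theta^i}(u)$, and hence for every $\theta\in\Theta$,
\[
\frac{1}{|\cU|}\sum_{u\in\cU}\E\!\left[\distance\!\left(z^{(i)}_{\theta^i}(u),\,z^{(i)}_{\theta}(u)\right)\right]\le \PP_i .
\]

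First I would record the Lipschitz transfer lemma: for any fixed parameter $\psi$ and any $\theta\in\Theta$, using the shared-unit coupling of the potential outcomes and the $L_z$-Lipschitzness of $z\mapsto\ell_i(\psi;z)$,
\[
\left|\E_{z\sim\cD_i(\theta)}[\ell_i(\psi;z)]-\E_{z\sim\cD}[\ell_i(\psi;z)]\right|
\le L_z\cdot\frac{1}{|\cU|}\sum_{u\in\cU}\E\!\left[\distance\!\left(z^{(i)}_{\theta}(u),\,z^{(i)}_{\theta^i}(u)\right)\right]\le L_z\PP_i .
\]
Next I would let $\theta^\star\in\arg\min_\theta\E_{z\sim\cD}[\ell_i(\theta;z)]$ and chain three inequalities. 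The Nash condition, applied with the deviation $\theta^\star$, gives $\E_{z\sim\cD}[\ell_i(\theta^i;z)]=\E_{z\sim\cD_i(\theta^i)}[\ell_i(\theta^i;z)]\le\E_{z\sim\cD_i(\theta^\star)}[\ell_i(\theta^\star;z)]$. Applying the transfer lemma with $\psi=\theta^\star$ and $\theta=\theta^\star$ bounds the right-hand side by $\E_{z\sim\cD}[\ell_i(\theta^\star;z)]+L_z\PP_i$, which is exactly $\min_\theta\E_{z\sim\cD}[\ell_i(\theta;z)]+L_z\PP_i$. This proves the first claim; note that the equilibrium baseline makes one of the two arguments of $\distance$ the reference action $\theta^i$, so a single application of $\PP_i$ suffices and no factor of two or four appears.

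For the strongly convex case, define $g(\theta):=\E_{z\sim\cD}[\ell_i(\theta;z)]$, which inherits $\gamma$-strong convexity from $\ell_i$ since it is an average of $\gamma$-strongly convex functions. With minimizer $\theta^\star$, strong convexity gives $g(\theta^i)-g(\theta^\star)\ge\tfrac{\gamma}{2}\|\theta^i-\theta^\star\|_2^2$; combining with the suboptimality bound $g(\theta^i)-g(\theta^\star)\le L_z\PP_i$ already established yields
\[
\|\theta^i-\theta^\star\|_2\le\sqrt{\frac{2L_z\PP_i}{\gamma}} .
\]
The only delicate point is the identification in the first paragraph of the correct baseline for $\PP_i$ as the equilibrium action $\theta^i$, together with the observation that the expectations over $\cD_i(\theta)$ and $\cD_i(\theta^i)$ admit the same unit-level coupling of potential outcomes; everything downstream is the same Lipschitz-plus-optimality bookkeeping as in Proposition~\ref{prop:SL}.
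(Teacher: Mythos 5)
Your proposal is correct and follows essentially the same route as the paper's proof: both use the Nash condition to compare $\theta^i$ against a deviation to the minimizer $\theta^\star$ of the risk on the equilibrium distribution, then transfer the loss of $\theta^\star$ back to $\cD$ via the unit-level coupling (the paper phrases this as a Wasserstein bound plus Kantorovich--Rubinstein duality, which is the same estimate), incurring only a single $L_z\PP_i$ because one side of the coupling is the equilibrium baseline. The strong-convexity step matches the paper's as well, yielding the same $\sqrt{2L_z\PP_i/\gamma}$ bound.
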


Proposition \ref{prop:equilibrium} implies that if the performative power of all firms is small ($\PP_i\rightarrow 0\;\forall i$),  then the equilibrium becomes indistinguishable from that of a static, non-performative economy with distribution $\cD$ over content. However, there is an interesting distinction between such a Nash equilibrium and the static setting: if the firms were to pursue a different strategy and decided to collude---for example, because of common ownership~\citep{azar18}\footnote{Common ownership refers to the situation  where many competitors are jointly held by a small set of large institutional investors~\citep{azar18}. }
---then they would be able to significantly shift the distribution. 

\paragraph{Mixture economy.}
Next, we analyze the behavior of multiple firms optimizing simultaneously. We consider a \textit{mixture economy}, where all of the firms share a common loss function $\ell$ and performative power is uniformly distributed across firms. Let $z(u),z^{C=1}_{\theta}(u)$ denote the pair of counterfactual outcomes before and after the deployment of $\theta$ in a hypothetical  monopoly economy where a single firm holds all the performative power. Let $\cD^{C=1}(\theta)$ be the distribution map associated with the variables $z^{C=1}_{\theta}(u)$ for $u\in \cU$. In a uniform mixture economy, we assume that each participant $u\in\cU$ uniformly chooses one of the $C$ firms. Consequently, the counterfactual $z_{\theta}(u)$ associated with one firm changing its predictor to $\theta$ is equal to $z(u)$ with probability $1-1/C$ and $z^{C=1}_{\theta}(u)$ otherwise. We can apply Proposition \ref{prop:equilibrium} to analyze the equilibria in the limit as $C \rightarrow \infty$.
\begin{corollary}
\label{cor:eq}
Suppose that all firms $i\in[C]$ share the same loss function $\ell_i=\ell$. Let $\theta^*$ be a symmetric Nash equilibrium in the mixture economy with $C$ platforms. As $C \rightarrow \infty$, it holds that:
\[\E_{z \sim \cD(\theta^*, \ldots, \theta^*)} [\ell(\theta^*;\,z)] \;\rightarrow\;  \min_{\theta} \E_{z \sim \cD^{C=1}(\theta^*)} [\ell(\theta;\,z)].\]
\end{corollary}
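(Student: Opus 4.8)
The plan is to read off the claim as a direct application of Proposition~\ref{prop:equilibrium}, so the work reduces to supplying two ingredients: (i) the value of the performative power $\PP_i$ of each firm in the mixture economy, and (ii) the identification of the equilibrium distribution $\cD(\theta^*,\ldots,\theta^*)$ with the monopoly distribution $\cD^{C=1}(\theta^*)$. Once both are in hand, the limit is a one-line squeeze. Throughout, recall that the symmetric equilibrium $\theta^*$ depends on $C$, so I will track constants carefully to ensure the bound is uniform in this dependence.

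\textbf{Step 1: performative power scales as $1/C$.} I would first compute $\PP_i$ from the mixture structure. When firm $i$ deviates to an action $\theta$, a given participant $u$ is affected only if $u$ selected firm $i$, an event of probability $1/C$; conditioned on selection the outcome equals the monopoly counterfactual $z^{C=1}_\theta(u)$, and otherwise it remains at the baseline $z(u)$. Since $\distance(z(u),z(u))=0$, the expected per-unit displacement is $\tfrac1C\,\E[\distance(z(u),z^{C=1}_\theta(u))]$, and taking the supremum over $\theta\in\Theta$ factors out the $1/C$:
\[
\PP_i \;=\; \sup_{\theta\in\Theta}\frac{1}{|\cU|}\sum_{u\in\cU}\frac1C\,\E\!\left[\distance\!\left(z(u),z^{C=1}_\theta(u)\right)\right]\;=\;\frac{1}{C}\,\PP^{C=1},
\]
where $\PP^{C=1}:=\sup_{\theta\in\Theta}\tfrac{1}{|\cU|}\sum_{u}\E[\distance(z(u),z^{C=1}_\theta(u))]$ is the (fixed, $C$-independent) performative power of the hypothetical monopoly. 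Hence $\PP_i\to 0$ as $C\to\infty$.

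\textbf{Step 2: distribution identification and squeeze.} Next I would argue that at the symmetric equilibrium every firm deploys the same model $\theta^*$, so no matter which firm a participant selected, that participant experiences exactly the action $\theta^*$ — precisely as under the monopoly. Consequently $\cD(\theta^*,\ldots,\theta^*)=\cD^{C=1}(\theta^*)$. Substituting this distribution and $\PP_i=\PP^{C=1}/C$ into Proposition~\ref{prop:equilibrium} (with $\ell_i=\ell$, $\theta^i=\theta^*$) yields
\[
\E_{z\sim\cD^{C=1}(\theta^*)}[\ell(\theta^*;\,z)]\;\le\;\min_{\theta}\;\E_{z\sim\cD^{C=1}(\theta^*)}[\ell(\theta;\,z)]\;+\;\frac{L_z\,\PP^{C=1}}{C}.
\]
The matching reverse inequality is immediate, since the right-hand minimum ranges over all $\theta\in\Theta$ and is therefore at most the value attained at $\theta=\theta^*$. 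As $L_z$ and $\PP^{C=1}$ are fixed and finite, the gap between the two sides is at most $L_z\PP^{C=1}/C\to 0$ uniformly in the $C$-dependent equilibrium $\theta^*$, giving the stated convergence.

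\textbf{Main obstacle.} The only genuinely non-mechanical step is the distribution identification in Step~2: one must see that although each firm's \emph{marginal} power over the population vanishes like $1/C$, the \emph{collective} effect of all firms simultaneously deploying the same $\theta^*$ reproduces the full monopoly displacement, so $\cD(\theta^*,\ldots,\theta^*)=\cD^{C=1}(\theta^*)$ rather than collapsing to the baseline. This requires unpacking the mixture-economy counterfactual definition carefully; the remaining computation of $\PP_i$ and the final squeeze are routine.
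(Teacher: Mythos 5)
Your proposal is correct and follows essentially the same route as the paper: bound each firm's performative power in the mixture economy by $\PP^{C=1}/C$ using the $1/C$ selection probability, identify the symmetric-equilibrium distribution with the monopoly distribution $\cD^{C=1}(\theta^*)$, and then apply Proposition~\ref{prop:equilibrium} and let $C\to\infty$. The only differences are cosmetic — you make explicit the distribution identification and the trivial reverse inequality that the paper leaves implicit.
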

Corollary \ref{cor:eq} demonstrates that a symmetric equilibrium approaches a \textit{performatively stable point} of $\cD^{C=1}$ as the number of firms in the economy grows and the performative power of each individual firm diminishes. In contrast, if  all $C$ firms collude, their performative power adds up and they would obtain the performative power of a monopoly platform. As a consequence, the firms would take advantage of their collective power and all choose a \textit{performatively optimal point} of $\cD^{C=1}$---recovering the equilibrium in a monopoly economy with a single firm. 
Since performatively optimal and performatively stable points can be arbitrarily far apart in general~\citep{miller2021outside}, a competitive economy of optimizing firms can exhibit a significantly different equilibrium from that of the monopoly or collusive economy.

\section{Performative power in strategic classification}
\label{sec:strategic}

We now turn to a stylized market model and investigate how performative power depends on the economy in which the firm operates. Specifically, we use \textit{strategic classification}~\citep{hardt16strat} as a test case for our definition. In strategic classification, participants strategically adapt their features with the goal of achieving a favorable classification outcome. Hence, performative power is determined by the degree to which a firm's classifier can impact participant features. We use this concrete market setting to examine the qualitative behavior of performative power in the presence of competition and outside options.

\subsection{Strategic classification setup}

Let $x(u)$ be the \textit{features} and $y(u)$ the \textit{binary label} describing a participant $u\in\cU$. A firm chooses a binary predictor $f: \mathbb{R}^m \rightarrow \left\{0,1\right\}$ and incurs loss $\ell(f(x),y) = |f(x) - y|$. 
Let  $\cD_{\text{orig}}$ denote the base distribution over features and labels $(x_{\text{orig}}(u),y_{\text{orig}}(u))$ absent any strategic adaptation, which we assume is continuous and supported everywhere. Let $\cD(f)$ be the distribution over potential outcomes $(x_f(u),y_f(u))$ that arises from the response of participant $u$ to the deployment of a model $f$. We assume that participant $u$ incurs a cost $c(x_{\text{orig}}(u),x')$ for changing their features to $x'$. In line with the standard strategic classification setup, the cost for feature changes is measured relative to the \textit{original} features. We further assume that~$c$ is a metric, in particular, any feature change that deviates from the original features results in nonnegative cost for participants.  
Further, we assume the label does not change, i.e., $y_f(u)=y_{\text{orig}}(u)$.

\paragraph{Instantiation of performative power.} We measure performative power over the data vector $z(u)=x(u)$, reflecting that strategic behavior impacts the feature vector that enters the prediction function. 
Then, the choice of distance metric enables us to define how to weight specific feature changes. 
For instance, in our running example of digital content recommendations where participants correspond to content creators, performative power measures how much the content of each channel changes with changes in the recommendation algorithm. 
If we are interested in the burden on \emph{content creators}, we choose the distance metric to be aligned with the cost function $c$ of producing a piece of content. However, if we are interested in measuring the impact of changes in content on viewers, a distance metric that reflects harm to viewers might be more appropriate. We  keep this distance metric abstract in our analysis. 

\begin{figure}[t!]
\subfigure{\label{fig:shift}
\includegraphics[width=0.5\textwidth]{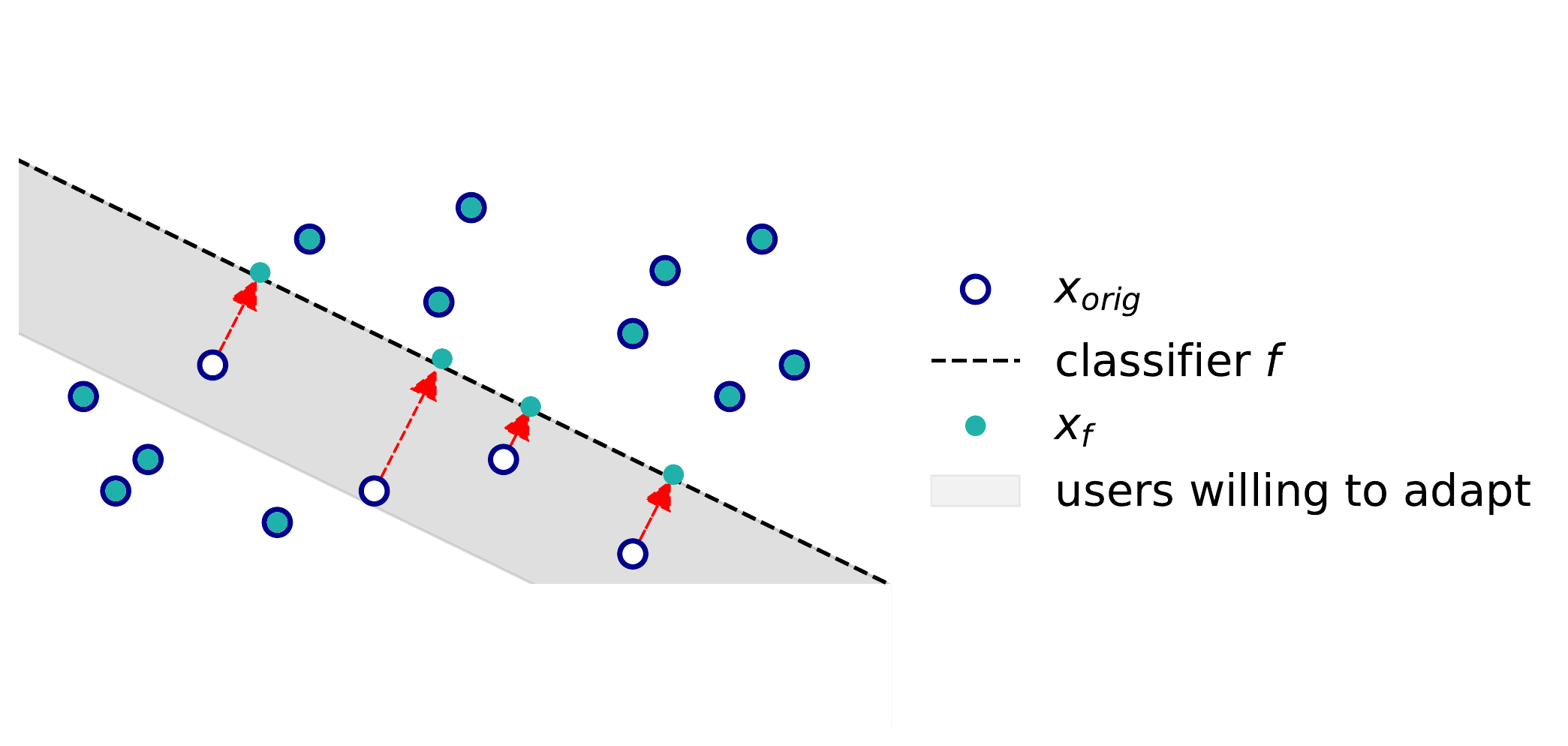}
}
\subfigure{\label{fig:B}
\centering
\begin{tikzpicture}[line/.style={>=latex}] 
\coordinate (V1) at (1.5, -0.7);
\coordinate (V0) at (0,0);
\coordinate (V3) at (0.6,1.6);
\filldraw[color=gray, fill=gray,fill opacity=0.2] (V0) circle (1.8);
\draw[color=white] (-3,-0.5) circle (1pt);
\draw[color=white] (4,-0.5) circle (1pt);
\draw[->, line, color=black, thick] (V0) -- node [below, near end]   {${x_{f}}$} (V1);
\draw[<->, line, color=red, thick, densely dotted] (V1) -- node [above, right=1pt] {$\distance(x_f,x_f')$} (V3) ;
\draw[->, line, color=black, thick] (V0) -- node [left] {$x_{f'}$} (V3);
\filldraw[color=DarkBlue,fill=white,thick] (V0) circle (2.5pt) node[below]{$x_{\text{orig}}$};
\draw[color=black](0.4,-1.8) circle (0pt) node[below]{\footnotesize{$\mathcal X_{\Delta \gamma}(u):=\{x:c(x_{\text{orig}}(u), x) \leq \Delta \gamma\}$}};
\end{tikzpicture}}
\caption{Illustrations for 2-dimensional strategic classification example. (left) Participants behave differently depending on their relative position to the decision boundary. (right) Visualization of participant expenditure constraint $\mathcal X_{\Delta \gamma}(u)$.}
\label{fig:SC}
\end{figure}

\subsection{Performative power in the monopoly setting}
\label{sec:sc-monopoly}

Consider a single firm that offers utility~$\gamma>0$ to its participants for a positive classification. We assume that participants want to use the service regardless of what classifier the firm chooses. In addition, assume there is an \emph{outside options} at utility level $\beta > 0$. This decreases the budget participants are willing to invest to their \emph{surplus utility} $\Delta\gamma=\max(0,\gamma-\beta)$. 
We adopt the following standard rationality assumption on participant behavior. 
\begin{assumption}[Participant Behavior Specification] 
\label{ass:behavior}
Let $\Delta \gamma \ge 0$ be the surplus utility that a participant can expect from a positive classification outcome from classifier $f$ over any outside option. Then, a participant $u \in \cU$ with original features $x_{\text{orig}}(u)$ will change their features according to 
\[x_f(u)=\argmax_{x'} \left( \Delta \gamma f(x') - c(x_{\text{orig}}(u),x')\right)\,.\]
\end{assumption}

Assumption~\ref{ass:behavior} guarantees that a participant will change their features if and only if the cost of a feature change is no larger than $\Delta \gamma$. Furthermore, if participants change their features, then they will expend the minimal cost required to achieve a positive outcome. For $\beta=0$ this recovers the typical strategic classification setup proposed by~\citet{hardt16strat}.
This specification of participant behavior allows us to bound performative power in terms of the cost function $c$ and the distance function $\distance$. Namely, the potential values that $x_f(u)$ can take on is restricted to
\begin{equation}
    \mathcal X_{\Delta \gamma}(u):=\{x:c(x_{\text{orig}}(u), x) \leq \Delta \gamma\}\,.
\end{equation}
Thus, the effect of a change to the decision rule on an individual participant $u$ can be upper bounded by the distance between $x(u)$ and the most distant point in $\cX_{\Delta \gamma}$. Aggregating these unilateral effects yields a bound on performative power:

\begin{lemma}
\label{lemma:Pdiam}
The performative power $\PP$ of the firm with respect to any set of predictors $\mathcal{F}$ can be upper bounded as:
\begin{equation}\PP \le \frac 1 {|\cU|}\sum_{u\in\cU}\;\sup_{x'\in\mathcal X_{\Delta\gamma}(u)}\mathrm{dist}(x(u),x')
\label{eq:ubPP}
\end{equation}
\end{lemma}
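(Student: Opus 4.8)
The plan is to start from the definition of performative power and reduce the claimed bound to a pointwise, $f$-independent estimate on each participant's feature movement. Since the instantiation sets $z(u)=x(u)$ (with $x(u)=x_{\text{orig}}(u)$ the pre-deployment features), for any fixed predictor $f\in\cF$ we have $\distance(z(u),z_f(u))=\distance(x(u),x_f(u))$. Under Assumption~\ref{ass:behavior} the response $x_f(u)$ is pinned down (up to tie-breaking among equally optimal points) by $\argmax_{x'}(\Delta\gamma f(x')-c(x_{\text{orig}}(u),x'))$, so there is no essential randomness to average over and $\E[\distance(x(u),x_f(u))]$ equals the deterministic distance for any realized maximizer.

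The key step is to show that every maximizer $x_f(u)$ lies in the expenditure set $\mathcal X_{\Delta\gamma}(u)=\{x:c(x_{\text{orig}}(u),x)\le\Delta\gamma\}$, for every $f$. I would argue this by comparing moving against staying: staying at $x_{\text{orig}}(u)$ gives objective value $\Delta\gamma f(x_{\text{orig}}(u))\ge 0$ at zero cost, so no maximizer can have negative objective value. If some maximizer incurred cost $c(x_{\text{orig}}(u),x_f(u))>\Delta\gamma$, then because $f(\cdot)\in\{0,1\}$ its objective value would be at most $\Delta\gamma-c(x_{\text{orig}}(u),x_f(u))<0$, contradicting optimality of not moving. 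Hence $c(x_{\text{orig}}(u),x_f(u))\le\Delta\gamma$, i.e. $x_f(u)\in\mathcal X_{\Delta\gamma}(u)$ — exactly the containment asserted in the text preceding the lemma.

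Given the containment, the remainder is routine. For any fixed $f$ and any $u$, monotonicity of the supremum yields $\distance(x(u),x_f(u))\le\sup_{x'\in\mathcal X_{\Delta\gamma}(u)}\distance(x(u),x')$, and averaging over $u\in\cU$ produces a bound whose right-hand side is independent of $f$. Taking the supremum over $f\in\cF$ on the left-hand side then gives $\PP$ on the left while leaving the right-hand side unchanged, since it is a uniform (in $f$) upper bound; this is precisely inequality~\eqref{eq:ubPP}.

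The main obstacle is the containment step, and in particular the boundary and tie-breaking cases: the equality case $c(x_{\text{orig}}(u),x_f(u))=\Delta\gamma$ and the possibility of non-unique maximizers. I would handle these by noting that the non-strict inequality in the definition of $\mathcal X_{\Delta\gamma}(u)$ absorbs the boundary case, and that the diameter bound holds for \emph{every} optimal response regardless of how ties are resolved, so the argument is insensitive to the exact tie-breaking rule. Everything after establishing $x_f(u)\in\mathcal X_{\Delta\gamma}(u)$ is elementary monotonicity of distance and supremum.
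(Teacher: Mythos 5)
Your proof is correct and follows essentially the same route as the paper's: establish that every best response $x_f(u)$ lies in $\mathcal X_{\Delta\gamma}(u)$ (the paper asserts this directly from Assumption~\ref{ass:behavior}; you supply the stay-versus-move comparison that justifies it), then bound each summand by the supremum over that set and pass the supremum over $f\in\cF$ through. One minor caveat: the current state $x(u)$ need not equal $x_{\text{orig}}(u)$ (it may already reflect adaptation to a previously deployed classifier), but this does not affect your argument, since the bound only requires $x_f(u)\in\mathcal X_{\Delta\gamma}(u)$.
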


 If the firm action space $\mathcal{F}$ is restricted to a parameterized family, the upper bound in Lemma \ref{lemma:Pdiam} need not be tight. In particular, a typical decision rule, such as a linear threshold classifier, does not impact all participants $u\in\cU$ equally (the amount of change that the firm can induce with a decision rule $f$ on an individual $u$ depends the relative position of their features $x_{\mathrm{orig}}(u)$ to the decision boundary, as we visualize in Figure~\ref{fig:SC}). Thus, the firm can't necessarily extract the full utility from all participants simultaneously. We quantify this gap for a 1-dimensional example in Appendix~\ref{app:ex-heter}. 

\paragraph{Personalization.} Interestingly, the ability to fully \textit{personalize} decisions to each user maximizes a firm's performative power. To capture this, let the first coordinate of the features $x(u)$ be the index of the user in the population and suppose that this coordinate is immutable. In this case, we can precisely pin down the performative power as follows:
\begin{proposition}
\label{prop:personalized}
Consider a population $\cU$ of users. Suppose that  the first coordinate is immutable: that is, $c(x, x') = \infty$ if $x_1 \neq x'_1$ and  $(x_{\mathrm{orig}}(u))_1 = (x(u))_1 = i$ where $i$ is the index of user $u$. Then, the performative power with respect to the set $\mathcal{F}$ of all functions from $\mathbb{R}^m$ to $\left\{0,1\right\}$ is given by:
\[\PP = \frac 1 {|\cU|}\sum_{u\in\cU}\;\sup_{x'\in\mathcal X_{\Delta\gamma}(u)}\mathrm{dist}(x(u),x').\]
\end{proposition}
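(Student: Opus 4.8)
The plan is to show that the upper bound established in Lemma~\ref{lemma:Pdiam} is attained under full personalization, so it suffices to prove the matching lower bound. Since the responses in Assumption~\ref{ass:behavior} are deterministic best responses, the expectation in the definition of $\PP$ drops out, and I only need to exhibit (a sequence of) admissible classifiers whose induced average displacement approaches $\frac{1}{|\cU|}\sum_{u}\sup_{x'\in\cX_{\Delta\gamma}(u)}\mathrm{dist}(x(u),x')$.

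The construction exploits that each user is identified by a distinct, immutable index in its first coordinate. Fix $\epsilon>0$. For every $u\in\cU$, by definition of the supremum I would choose a target $x^\star(u)\in\cX_{\Delta\gamma}(u)$ with $\mathrm{dist}(x(u),x^\star(u))\ge \sup_{x'\in\cX_{\Delta\gamma}(u)}\mathrm{dist}(x(u),x') - \epsilon$. Finiteness of the cost $c(x_{\mathrm{orig}}(u),x^\star(u))\le\Delta\gamma$ forces $(x^\star(u))_1 = i$, so each target lies in its own user's slice $\{x : x_1 = i\}$; because distinct users carry distinct indices, these slices — and hence the targets — are pairwise disjoint. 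I would then take $f_\epsilon\in\cF$ to be the indicator of the finite target set $\{x^\star(u) : u\in\cU\}$, which is a legitimate element of $\cF$ precisely because $\cF$ contains all functions $\mathbb{R}^m\to\{0,1\}$.

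Next I would verify that $f_\epsilon$ drives every user exactly to their own target. Within user $u$'s slice the only point $f_\epsilon$ labels positive is $x^\star(u)$ — the targets of other users live in other slices and are at infinite cost — and reaching it costs at most $\Delta\gamma$; hence by Assumption~\ref{ass:behavior} the user relocates to $x^\star(u)$ (and if $x_{\mathrm{orig}}(u)$ already coincides with $x^\star(u)$, the user stays there, giving the same outcome). Crucially, user $u$'s response ranges only over slice $i$, so the personalized design induces no cross-user interference and every user simultaneously attains its individual maximizer. Summing displacements gives $\frac{1}{|\cU|}\sum_{u}\mathrm{dist}(x(u),x^\star(u)) \ge \frac{1}{|\cU|}\sum_{u}\sup_{x'\in\cX_{\Delta\gamma}(u)}\mathrm{dist}(x(u),x') - \epsilon$; taking the supremum over $\cF$ and then letting $\epsilon\to 0$ yields the lower bound, and combining with Lemma~\ref{lemma:Pdiam} gives the stated equality.

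I expect the main obstacle to be the best-response bookkeeping rather than any deep idea: one must confirm that the single positively-labeled point in each slice, together with the budget $c(x_{\mathrm{orig}}(u),x^\star(u))\le\Delta\gamma$ and the change-if-and-only-if behavior of Assumption~\ref{ass:behavior}, genuinely forces relocation to $x^\star(u)$ for every $u$, and that immutability of the index truly decouples the users so that a single classifier realizes all per-user suprema at once. The $\epsilon$-argument accommodating a possibly unattained supremum is routine but worth stating explicitly so as not to assume existence of a farthest point.
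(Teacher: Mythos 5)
Your construction is the same as the paper's: label positive exactly one (near-)farthest reachable point per user, using the immutable index coordinate to decouple users so a single classifier realizes every per-user supremum simultaneously. The only difference is that you handle a possibly unattained supremum via an $\epsilon$-approximation, whereas the paper writes the target set directly as an $\argsup$; this is a minor but legitimate extra care, and the rest of your argument matches the paper's proof.
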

Proposition \ref{prop:personalized} demonstrates that when firms can fully personalize their decisions to each user, the upper bound in Lemma \ref{lemma:Pdiam} is in fact tight. In particular, the firm is able to extract maximum utility from each user, despite the heterogeneity in the population.

\paragraph{Value of the service.} We investigate the role of $\Delta \gamma$ in the upper bound of Lemma~\ref{lemma:Pdiam}. Recall that user behavior is determined by the cost $c$ of changing features relative to $x_{\text{orig}}(u)$, performative power is measured as the distance from the current state $x(u)$ with respect to $\distance$ (see Figure~\ref{fig:SC}). The Lipschitz constant 
\[L := \sup_{x, x'} \frac{\distance(x, x')}{c(x, x')}\] 
allows us to relate the two metrics and derive a simpler bound:
\begin{corollary}
\label{prop:BSC}
The performative power $\PP$ of a firm in the monopoly setup with respect to any set of predictors $\mathcal{F}$ can be bounded as:  
\begin{equation}
\label{eq:boundSC}
\PP\leq 2 L \,\Delta \gamma.\end{equation}
where $\Delta \gamma$ measures the surplus utility offered by the service of the firm over outside options. 
\end{corollary}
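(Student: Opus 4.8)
The plan is to feed the reachable-set bound from Lemma~\ref{lemma:Pdiam} into the Lipschitz relation $L=\sup_{x,x'}\distance(x,x')/c(x,x')$, which by definition gives $\distance(a,b)\le L\,c(a,b)$ for all feature vectors $a,b$. Concretely, Lemma~\ref{lemma:Pdiam} already reduces the problem to bounding, for each $u\in\cU$, the quantity $\sup_{x'\in\mathcal{X}_{\Delta\gamma}(u)}\distance(x(u),x')$; once I show this is at most $2L\,\Delta\gamma$ uniformly in $u$, averaging over the population immediately yields $\PP\le 2L\,\Delta\gamma$.

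To bound a single term, I would fix $u$ and an arbitrary $x'\in\mathcal{X}_{\Delta\gamma}(u)$ and route the comparison through the original features $x_{\text{orig}}(u)$. Since $\distance$ is a metric, the triangle inequality gives $\distance(x(u),x')\le\distance(x(u),x_{\text{orig}}(u))+\distance(x_{\text{orig}}(u),x')$. I would then apply the Lipschitz relation to each of the two legs, turning them into costs: $\distance(x(u),x_{\text{orig}}(u))\le L\,c(x_{\text{orig}}(u),x(u))$ and $\distance(x_{\text{orig}}(u),x')\le L\,c(x_{\text{orig}}(u),x')$, using that $c$ is a metric (hence symmetric) to line up the arguments. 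The second cost is at most $\Delta\gamma$ precisely because $x'\in\mathcal{X}_{\Delta\gamma}(u)=\{x:c(x_{\text{orig}}(u),x)\le\Delta\gamma\}$.

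The remaining, and really the only delicate, point is the first leg: I must argue $c(x_{\text{orig}}(u),x(u))\le\Delta\gamma$ as well, i.e.\ that the factual state $x(u)$ itself lies in the expenditure ball $\mathcal{X}_{\Delta\gamma}(u)$. This follows from Assumption~\ref{ass:behavior}: the observed features are the best response to whatever predictor is currently deployed, so a participant either leaves their features unchanged (cost $0$) or moves to a point whose cost relative to $x_{\text{orig}}(u)$ is at most the surplus budget $\Delta\gamma$; in both cases $x(u)\in\mathcal{X}_{\Delta\gamma}(u)$. Combining the two legs gives $\distance(x(u),x')\le L\,\Delta\gamma+L\,\Delta\gamma=2L\,\Delta\gamma$, where the factor $2$ is exactly the price of allowing both the factual and the counterfactual feature vectors to sit a full budget $\Delta\gamma$ away from $x_{\text{orig}}(u)$. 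Taking the supremum over $x'\in\mathcal{X}_{\Delta\gamma}(u)$ and then averaging over $u\in\cU$ through Lemma~\ref{lemma:Pdiam} completes the argument. I expect no genuine obstacle here; the main thing to get right is the bookkeeping that both endpoints are separately controlled by the same budget, and that the metric properties of $c$ and $\distance$ (symmetry and triangle inequality) are what make the two-legged detour through $x_{\text{orig}}(u)$ legitimate.
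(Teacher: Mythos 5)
Your proof is correct and takes essentially the same route as the paper: both reduce the claim via Lemma~\ref{lemma:Pdiam} to bounding the $\distance$-diameter of $\mathcal{X}_{\Delta\gamma}(u)$ by $2L\,\Delta\gamma$, using that $x(u)$ and $x'$ both lie within cost $\Delta\gamma$ of $x_{\text{orig}}(u)$. The only (immaterial) difference is the order of operations --- you apply the triangle inequality to $\distance$ and then convert each leg via $L$, whereas the paper first writes $\distance(x,x')\le L\,c(x,x')$ and then applies the triangle inequality to the metric $c$.
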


Corollary \ref{prop:BSC} makes explicit that  $\Delta \gamma>0$ is a prerequisite for a firm to have any performative power, even in a monopoly economy. This qualitative behavior of performative power is in line with common intuition in economics that monopoly power relies on the firm offering a service that is superior to existing options.

\subsection{Firms competing for participants}\label{subsec:strategiccompete}

We next consider a model of \textit{competition} between two firms where participants always choose the firm that offers higher utility. In this model of perfectly elastic demand, we demonstrate how the presence of competition  reduces the performative power of a firm. In particular, we will show that for a natural constraint on the firm's action set, each firm's performative power can drop to zero at equilibrium, regardless of how much utility participants derive from the firm's service. 

To model competition in strategic classification, we specify participant behavior as follows: 
Given that the first firm deploys $f_1$ and the second firm deploys $f_2$, then participant $u$  will choose the first firm if 
$\max_{x'} \left(  f_1(x') - c(x_{\text{orig}}(u),x')\right) > \max_{x'} \left( f_2(x') - c(x_{\text{orig}}(u),x')\right)$, and choose $f_2$ otherwise. A participant tie-breaks in favor of the lower threshold, randomizing if they are equal. After choosing firm $i \in \left\{1,2\right\}$, they change their features according to Assumption \ref{ass:behavior} as $x_f(u) = \argmax_{x'} \left(\gamma f_i(x') - c(x_{\text{orig}}(u),x')\right)$, where $\gamma$ is the utility of a positive outcome.

We assume that the firm chooses their classifier based on the following utility function. For a rejected participant, the firm receives utility $0$ and for an accepted participant, the firm receives utility  $\alpha>0$ if they have a positive label and utility $-\alpha$ if they have a negative label. 
We assume that the firm's action set is constrained to models for which it derives non-negative utility. More specifically, if $f_{\theta}$ denotes the model deployed by the competing firm, let the action set $\cF^+(\theta)$ of this firm denote the set of models that yield non-negative utility for the firm.

For simplicity, focus on a 1-dimensional setup where $\cF$ is the set of threshold functions. We assume that  the cost function $c(x, x')$ is continuous in both of its arguments, strictly increasing in $x'$ for $x' > x$, strictly decreasing for $x' < x$, and satisfies $\lim_{x' \rightarrow \infty} c(x, x') = \infty$. Furthermore, we assume that the posterior $p(x) = \mathbb{P}_{\DBase}[Y = 1 \mid X = x]$ is strictly increasing in $x$ with $\lim_{x \rightarrow -\infty} p(x) = 0$, and $\lim_{x \rightarrow \infty} p(x) = 1$.

We show that the presence of competition  drives the performative power of each firm to zero. 

\begin{proposition}
\label{cor:perfpower}
Consider the 1-dimensional setup with two competing firms specified above. Suppose that the economy is at a symmetric Nash equilibrium $(\theta^*, \theta^*)$. If $L < \infty$, then the performative power of either firm with respect to the action set $\cF^+(\theta^*)$ is 
\[\PP = 0.\] 
\end{proposition}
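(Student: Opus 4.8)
The plan is to treat this as a Bertrand-style undercutting argument, reducing the claim $\PP=0$ to showing that at the symmetric equilibrium no \emph{admissible} deviation moves any participant's features. First I would record the basic monotonicity of threshold classifiers. Writing $f_\theta(x)=\Indicator[x\ge\theta]$, the value a participant with original feature $x_{\mathrm{orig}}$ derives from a firm using threshold $\theta$ is $\gamma$ when $x_{\mathrm{orig}}\ge\theta$ and $\max(0,\gamma-c(x_{\mathrm{orig}},\theta))$ otherwise. Since $c$ is strictly increasing in its second argument above the first, this value is nonincreasing in $\theta$, so a strictly lower threshold is weakly preferred by every participant and strictly preferred by every affected one. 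With the stated tie-break toward the lower threshold, the firm posting the strictly lower threshold captures the entire population, the firm posting the strictly higher one captures nobody, and equal thresholds split the population. In particular, at $(\theta^*,\theta^*)$ the feature vector $x(u)$ each participant settles on is the best response to $\theta^*$ and does not depend on which of the two identical firms they join, so $x(u)$ is well defined.

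Next I would introduce the single-firm (whole-market) profit $U(\theta):=\sum_{u\in\cU}\alpha\bigl(2p(x_{\mathrm{orig}}(u))-1\bigr)\Indicator[x_{\mathrm{orig}}(u)\ge x_{\min}(\theta)]$, where $x_{\min}(\theta)$ is the smallest original feature that can still afford to reach $\theta$, characterized by $c(x_{\min}(\theta),\theta)=\gamma$; this is exactly the utility a firm earns when it captures everyone at threshold $\theta$, using that labels are immutable so an accepted participant contributes in expectation $\alpha(2p(x_{\mathrm{orig}})-1)$. Continuity of $c$ and $p$ together with $\DBase$ being continuous make $\theta\mapsto x_{\min}(\theta)$ and hence $U$ continuous, and monotonicity of $p$ shows $x_{\min}$ is increasing in $\theta$, so $U$ is single-peaked with $U(+\infty)=0$; existence of the assumed nondegenerate symmetric equilibrium forces $U$ to cross zero on its increasing branch at a point $\theta_{\mathrm{low}}$ below the peak. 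The Bertrand step is then: at the equilibrium each firm earns $U(\theta^*)/2$, the action-set constraint forces $U(\theta^*)\ge 0$, and if $U(\theta^*)>0$ a firm could undercut to $\theta^*-\epsilon$, which by continuity still earns $U(\theta^*-\epsilon)>U(\theta^*)/2\ge 0$ on the whole market---an admissible, strictly profitable deviation. Hence $U(\theta^*)=0$; since $+\infty$ is ruled out (undercutting it is profitable) and $\theta_{\mathrm{low}}$ is the only finite zero, $\theta^*=\theta_{\mathrm{low}}$, and strict monotonicity of $p$ upgrades this to $U(\theta_1)<0$ for every $\theta_1<\theta^*$.

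To finish, I would read off $\cF^+(\theta^*)$. A deviation $\theta_1<\theta^*$ captures the whole market and earns $U(\theta_1)<0$, so it is \emph{excluded} from the admissible set; a deviation $\theta_1\ge\theta^*$ captures nobody (or splits at equality) and earns $0$, so it is admissible, but then every participant best-responds to firm~2's threshold $\theta^*$ exactly as at equilibrium, giving $x_f(u)=x(u)$ and $\distance(x(u),x_f(u))=0$. Thus every admissible action leaves all features unchanged, each summand of the performative-power expression vanishes, and $\PP=0$; the hypothesis $L<\infty$ is what guarantees $\distance$ is dominated by $c$, so that "no cost-effective movement" genuinely translates into zero displacement under $\distance$. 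I expect the main obstacle to be the second paragraph: pinning down that $\theta^*$ sits exactly at the \emph{lower} zero-crossing and that $U$ is \emph{strictly} negative just below it---without strictness one could have an admissible lower threshold with $U=0$ that still relocates participants and breaks $\PP=0$---which is precisely where the strict-monotonicity assumptions on $p$ and $c$ and the full-support assumption on $\DBase$ must be used carefully.
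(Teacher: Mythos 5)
Your proposal is correct and follows essentially the same route as the paper: you establish the zero-profit (Bertrand) characterization of the symmetric equilibrium and the identity $\cF^+(\theta^*)=[\theta^*,\infty)$ exactly as in the paper's Proposition~\ref{prop:zeroprofit}, and then observe that every admissible (upward) deviation leaves all participants best-responding to the other firm's threshold $\theta^*$, so no features move and every summand in the performative-power expression vanishes. The only organizational difference is that you bypass the paper's general upper bound on $\PP$ at an arbitrary symmetric state (Proposition~\ref{prop:competition}), whose sole role in the paper's argument is to reduce, once $\theta_{\mathrm{min}}=\theta^*$, to precisely the direct observation you make.
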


The intuition behind Proposition~\ref{cor:perfpower} is that 
performative power of a firm purely arises from how much larger the current threshold $\theta$ is than the minimum threshold a firm can deploy within their action set $\cF^+(\theta)$. At the Nash equilibrium (where both firms best-respond with respect to their utility functions taking their own performative effects into account), the firms deploy exactly the minimum threshold within their action set. The formal proof of the result can be found in Appendix~\ref{app:competition}.

Proposition~\ref{cor:perfpower} bears an intriguing resemblance to well-known results on market power under Bertrand competition in economics (see e.g.,~\citep{Baye2008-bertrand}) that show how a state of zero power is reached in classical pricing economies with only two competing firms.

\section{Discrete display design}
\label{sec:DDD}

Now that we have examined the theoretical properties of performative power, we turn to the question of \textit{measuring} performative power from observational data. We focus on our running example of digital content recommendation and  propose an observational design to measure the recommender system's ability to shape consumption patterns through the arrangement of content. 


\subsection{The causal effect of position}
We assume that there are $C$ pieces of content $\mathcal{C} = \{0, 1, 2,\dots, C-1\}$ that the platform can present in $m$ display slots. We make the convention that item $0$ corresponds to leaving the display slot empty. We focus on the case of two display slots ($m=2$) since it already encapsulates the main idea. The first display slot is more desirable as it is more likely to catch the viewer's attention. 
Researchers have investigated the causal effect of position on consumption, often via quasi-experimental methods such as regression discontinuity designs, but also through experimentation in the form of A/B tests.

\begin{definition}[Causal effect of position]
Let the treatment $T\in\{0,1\}$ be the action of flipping the content in the first and second display slots for a viewer $u$, and let the potential outcome variable~$Y_t(u)$ indicate whether, under the treatment~$T=t$, viewer~$u$ consumes the content that is initially in the first display slot. We call the corresponding average treatment effect
\[\beta = \big|\frac{1}{|\cU|}\sum_{u\in\cU}\mathbb{E}\left[Y_1(u) - Y_0(u)\right]\big|\]
the \textit{causal effect of position} in a population of viewers $\cU$, where the expectation is taken over
the randomness in the potential outcomes. 
\label{def:display-effect}
\end{definition}

For example, \citet{NK15} estimate the causal effect of position in search advertising, where advertisements are displayed across a number of ordered slots whenever a keyword is searched. They measured the causal effect of position on click-through rate of participants. 

\subsection{From causal effect of position to performative power}
The identification strategy we propose, called \emph{discrete display design (DDD)}, derives a lower bound on performative power by repurposing existing measures of the causal effect of position.  Note that we focus on content recommendation in this section, the design however can be generalized to other settings where the firm's action corresponds to a discrete decision of how to display content. Setting up the DDD involves two steps: First, we need to instantiate the definition of performative power with a suitable action set which we choose such that one of the firm's actions result in swapping the position of content items, and second, we plug in the causal effect of position to lower bound performative power.

While the first step is mostly a technical exercise, the second step relies on a crucial assumption. In particular, it involves relating the unilateral causal effect of position to performative power that quantifies the effect of an action on the entire population of viewers. Thus, for being able to extrapolate the effect from a single viewer to the population DDD relies on a non-interference assumption. In the advertising example, this means that the ads shown to one viewer do not influence the consumption behavior of another viewer.  We investigate the two steps in  detail:

\paragraph{Step 1: Instantiating performative power.} 
Let the units~$\cU$ be the set of viewers. 
For each viewer $u\in\cU$ let $z(u)\in\mathbb{R}^C$ be the distribution over content items $\mathcal{C}$ consumed by viewer~$u$, represented as a histogram. More formally, let $z(u)$ be a vector in the $C$-dimensional probability simplex where the $i$th coordinate is the probability that viewer~$u$ consumes content item~$i$. The metric $\distance(z, z')$ is the $\ell_1$-distance $\distance(z, z') = \sum_{i=0}^{C-1} |z[i]-z'[i]|$. 

The decision space $\cF$ of the firm corresponds to its decisions of how to arrange content in the~$m=2$ display slots. It is natural to decompose this decision into a continuous score function~$s$ followed by a discrete conversion function~$\kappa$ that maps scores into an allocation. The score function $s\colon \cU \rightarrow \mathbb{R}^{C}$ maps the viewer to a vector of scores, where each coordinate is an estimate of the quality of the match between the viewer and the corresponding piece of content. The conversion function~$\kappa\colon \mathbb{R}^{C} \rightarrow \mathcal{C}^2$ takes as input the vector of scores and outputs an ordered list of items with the top~$2$ scores. We assume the platform displays these~$2$ items in order and the conversion function~$\kappa$ is fixed. Hence, we identify the firm's action space with the set of feasible score functions $\mathcal{S}\subseteq \cU\rightarrow\mathbb{R}^C$. 

To define the reference state $z(u)$, we think of~$s_\text{curr}$ as being  the score function currently deployed by the platform. Let~$\delta$ be the maximum difference in the highest score and second highest score for any user under $s_{\text{curr}}.$ 
Consider the set $\mathcal{S}$ of local perturbations to the scoring function $s_{\text{curr}}$ defined as \[\mathcal{S} := \left\{s\colon\cU \rightarrow \mathbb{R}^C \mid \forall u\in\cU\colon \|s(u) - s_\text{curr}(u)\|_{\infty} \le \delta \right\}.\] 
Notably, there exists an $s_\mathrm{swap}\in\mathcal S$ that is capable of swapping the order of the first and second highest scoring item under~$s_{\text{curr}}$ for any user $u\in\mathcal U$ simultaneously. We denote the counterfactual variable corresponding to a score function~$s\in\mathcal{S}$ as~$z_s(u)$. Given this specification, performative power with respect to the action set~$\mathcal{S}$ can be bounded by the causal effect of $s_\mathrm{swap}$ as follow
\begin{equation}\PP = \sup_{s \in \mathcal{S}}\; \frac{1}{\cU} \sum_{u \in \cU} \|z_{s_\text{curr}}(u) - z_s(u)\|_{1}\geq \frac{1}{\cU} \sum_{u \in \cU} \|z_{s_{\text{curr}}}(u) - z_{s_\mathrm{swap}}(u)\|_{1}
\label{eq:boundPP}.\end{equation}

\paragraph{Step 2: Lower bounding performative power.}
To relate the lower bound on performative power from \eqref{eq:boundPP} to the causal effect of position, let  $i_{\text{top}}(u)=\kappa\circ s_{\text{curr}}(u)[1]$ denote the coordinate of the item displayed to user $u$ in the first display slot under $s_\text{curr}$. Then, we can lower bound each term in the sum~\eqref{eq:boundPP} as \[\|z_{s_{\text{curr}}}(u) - z_{s_\mathrm{swap}}(u)\|_{1}\geq |z_{s_{\text{curr}}}(u)[i_\text{top}(u)] - z_{s_\mathrm{swap}}(u)[i_\text{top}(u)]|.\] 
Now, to enable us to study the effect of changing ${s_{\text{curr}}}$ to ${s_{\text{swap}}}$ independently for each user we place the following non-interference assumption on the counterfactual variables which  closely relates to the stable unit treatment value assumption (SUTVA)~\citep{imbens2015causal} prevalent in causal inference.
\begin{assumption}[No interference across units]
\label{assumption:independence}
For any $u \in \mathcal{U}$ and any pair of scoring functions $s_1, s_2 \in \mathcal{S}$, if $\kappa(s_1(u)) = \kappa(s_2(u))$, it also holds that $z_{s_1}(u) = z_{s_2}(u)$.
\end{assumption} 
 The assumption requires that there are no spill-over or peer effects and the content a viewer consumes only depends on the content recommended to them and not the content recommended to other viewers.
 The last step is to see that the effect of a unilateral change to the consumption of item $i_\text{top}(u)$ under $s_\text{swap}$ exactly corresponds to what we defined as the causal effect of position. Aggregating these unilateral causal effects across all viewers in the population we obtain a lower bound on performative power. The proof of Theorem~\ref{thm:swapping} can be found in Appendix~\ref{app:ddd}.
\begin{theorem}
\label{thm:swapping}
Let $\PP$ be performative power as instantiated above. If Assumption \ref{assumption:independence} holds, then performative power is at least as large as the causal effect of position 
\[\PP \ge \beta.\]
\end{theorem}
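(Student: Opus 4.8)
The plan is to follow the two-step structure sketched just before the theorem, making the identification step precise. First I would start from the lower bound in~\eqref{eq:boundPP}: since $\PP$ is a supremum over the perturbation set $\mathcal{S}$ and $s_{\mathrm{swap}}\in\mathcal{S}$, simply evaluating the objective at the single action $s_{\mathrm{swap}}$ yields $\PP \ge \frac{1}{|\cU|}\sum_{u\in\cU}\|z_{s_{\text{curr}}}(u)-z_{s_{\mathrm{swap}}}(u)\|_1$ for free, with no supremum left to handle.

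Next I would collapse each $\ell_1$ term to a single coordinate. Writing $i_{\text{top}}(u)=\kappa\circ s_{\text{curr}}(u)[1]$ for the item in the first slot under $s_{\text{curr}}$, the elementary bound $\|v\|_1\ge|v_j|$ gives $\|z_{s_{\text{curr}}}(u)-z_{s_{\mathrm{swap}}}(u)\|_1 \ge |z_{s_{\text{curr}}}(u)[i_{\text{top}}(u)]-z_{s_{\mathrm{swap}}}(u)[i_{\text{top}}(u)]|$ for every $u$.

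The heart of the argument, and the step I expect to be the main obstacle, is identifying these two scalars with the per-user potential outcomes of Definition~\ref{def:display-effect}. Under $s_{\text{curr}}$ the item $i_{\text{top}}(u)$ sits in the first slot, so $z_{s_{\text{curr}}}(u)[i_{\text{top}}(u)]$ is the probability that $u$ consumes the content initially in the first slot with no flip, namely $\mathbb{E}[Y_0(u)]$; under $s_{\mathrm{swap}}$ the arrangement is flipped and $i_{\text{top}}(u)$ moves to the second slot, so $z_{s_{\mathrm{swap}}}(u)[i_{\text{top}}(u)]=\mathbb{E}[Y_1(u)]$. The delicate point is that $s_{\mathrm{swap}}$ flips the display for \emph{every} viewer at once, whereas $Y_1(u)$ is defined by flipping only the slots shown to $u$. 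This is exactly the gap that Assumption~\ref{assumption:independence} closes: because $z_s(u)$ depends only on the arrangement $\kappa(s(u))$ presented to $u$ and not on what other viewers see, the global-swap counterfactual $z_{s_{\mathrm{swap}}}(u)$ coincides with the single-viewer-swap counterfactual that defines the causal effect of position. I would state this matching carefully, since it is where the non-interference assumption does all the work.

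Finally I would aggregate. Combining the above gives $\PP \ge \frac{1}{|\cU|}\sum_{u\in\cU}|\mathbb{E}[Y_1(u)-Y_0(u)]|$, and a single application of the triangle inequality moves the absolute value outside the average, $\frac{1}{|\cU|}\sum_{u}|\mathbb{E}[Y_1(u)-Y_0(u)]| \ge \big|\frac{1}{|\cU|}\sum_{u}\mathbb{E}[Y_1(u)-Y_0(u)]\big| = \beta$, which is the claim. The only thing to double-check here is the direction of this inequality: because the sum of magnitudes dominates the magnitude of the sum, any cancellation of positive and negative per-user effects can only shrink $\beta$ and never pushes our lower bound below it.
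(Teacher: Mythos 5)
Your proposal is correct and follows essentially the same route as the paper's proof: evaluate the supremum at $s_{\mathrm{swap}}$, drop to the single coordinate $i_{\text{top}}(u)$, invoke Assumption~\ref{assumption:independence} to equate the global-swap counterfactual with the per-viewer-swap counterfactual defining $Y_1(u)$, and aggregate. Your explicit final triangle-inequality step (average of magnitudes dominating the magnitude of the average) is in fact slightly more careful than the paper's write-up, which states that chain as an equality.
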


As a case study, consider the search advertisement marketplace of ~\cite{NK15}. We can leverage Theorem~\ref{thm:swapping} to relate the findings of their observational causal design to performative power. In particular, \citet{NK15} examine position effects in search advertising, where ads are displayed across a number of ordered slots whenever a keyword is searched. They found that the effect of showing an ad in display slot 1 versus display  slot 2 corresponds to $0.0048$ clicks per impression (see Table 2 in their paper). 
By treating each incoming keyword query as a distinct ``viewer'' $u$, this number exactly corresponds to what we defined as the causal effect of position. Thus, we can apply Theorem \ref{thm:swapping} to get $P \ge 0.0048$. Putting this into context; the mean click-through rate in display slot 2 is $0.023260$. Hence, the lower bound $0.0048$ is a $21\%$ percent increase relative to the baseline. The firm thus has a substantial ability to shape what advertisements users click on.

\section{Discussion}
\label{sec:discussion}
We discuss the potential role of performative power in competition policy and antitrust enforcement. The complexity of digital marketplaces has made it necessary to develop new approaches for evaluating and regulating these economies. One challenge is that traditional measures of market power---such as the Lerner Index~\citep{lerner34lernerindex}, or the Herfindahl–Hirschman Index (HHI)---are based on classical pricing markets for homogeneous goods, but these markets map poorly to digital economies. 
In particular, these measures struggle to appropriately capture the multi-sided nature of digital economies, to describe the multi-dimensionality of interactions, and to account for the role of behavioral weaknesses of consumers---such as tendencies for single-homing, vulnerability to addiction, and the impact of framing and nudging on participant behavior~\citep[e.g.][]{thaler08nudge, fogg02}. We further expand on this in Appendix~\ref{sec:discussion-app}.

By focusing on directly observable statistics, performative power could be particularly helpful in  markets that resist a clean mathematical specification. Performative power is sensitive to the market nuances without explicitly modeling them. For example, suppose that as a result of uncertainty about market boundaries, a regulator failed to account for a competitor in a marketplace. Performative power would still implicitly capture the impact of the competitor and indicate the market is more competitive than suspected.

We leave open the question of how to best instantiate performative power in a given marketplace. Conceptually, we view performative power as a tool to flag market situations that merit further investigation, since it corresponds to ``potential for harm to users''. However, if a regulator wishes to draw fine-grained conclusions about consumer harm, it is crucial to appropriately instantiate the choice of action set $\cF$, the definition of a population $\cU$, and the specification of the features $z$. As an example, we show in Appendix~\ref{appendix:optimizationSC} how to closely relate performative power into consumer harm for strategic classification. In general, however, harm and power are two distinct normative concepts, and going from performative power to consumer harm thus requires additional substantive arguments.

\section*{Acknowledgments}

We're grateful to Guy Aridor, Dirk Bergemann, Emilio Calvano, Gabriele Carovano, and Martin C.~Schmalz for helpful feedback and pointers.  MJ acknowledges support from the Paul and Daisy Soros Fellowship and the Open Phil AI Fellowship. 

\newpage
\bibliography{ref}

\begin{thebibliography}{38}
\providecommand{\natexlab}[1]{#1}
\providecommand{\url}[1]{\texttt{#1}}
\expandafter\ifx\csname urlstyle\endcsname\relax
  \providecommand{\doi}[1]{doi: #1}\else
  \providecommand{\doi}{doi: \begingroup \urlstyle{rm}\Url}\fi

\bibitem[Azar et~al.(2018)Azar, Schmalz, and Tecu]{azar18}
Jose Azar, Martin~C. Schmalz, and Isabel Tecu.
\newblock Anticompetitive effects of common ownership.
\newblock \emph{The Journal of Finance}, 73\penalty0 (4):\penalty0 1513--1565,
  2018.

\bibitem[Ball(2020)]{Ball2020}
Ian Ball.
\newblock Scoring strategic agents.
\newblock \emph{ArXiv:1909.01888}, 2020.

\bibitem[Baye and Kovenock(2008)]{Baye2008-bertrand}
Michael~R. Baye and Dan Kovenock.
\newblock \emph{Bertrand competition}.
\newblock Palgrave Macmillan UK, London, 2008.

\bibitem[Bergemann and Bonatti(2022)]{bergemann2022data}
Dirk Bergemann and Alessandro Bonatti.
\newblock Data, competition, and digital platforms.
\newblock 2022.

\bibitem[Bornier(1992)]{bornier92}
Jean Magnan~de Bornier.
\newblock {The ``Cournot-Bertrand Debate'': A Historical Perspective}.
\newblock \emph{History of Political Economy}, 24\penalty0 (3):\penalty0
  623--656, 09 1992.

\bibitem[Brown et~al.(2022)Brown, Hod, and Kalemaj]{brown2022performative}
Gavin Brown, Shlomi Hod, and Iden Kalemaj.
\newblock Performative prediction in a stateful world.
\newblock In \emph{International Conference on Artificial Intelligence and
  Statistics (AISTATS)}, pages 6045--6061. PMLR, 2022.

\bibitem[Br\"{u}ckner et~al.(2012)Br\"{u}ckner, Kanzow, and
  Scheffer]{bruckner12pred}
Michael Br\"{u}ckner, Christian Kanzow, and Tobias Scheffer.
\newblock Static prediction games for adversarial learning problems.
\newblock \emph{JMLR}, 13\penalty0 (1):\penalty0 2617--2654, September 2012.

\bibitem[Calvano and Polo(2021)]{calvano2021market}
Emilio Calvano and Michele Polo.
\newblock Market power, competition and innovation in digital markets: A
  survey.
\newblock \emph{Information Economics and Policy}, 54:\penalty0 100853, 2021.

\bibitem[Cr{\'e}mer et~al.(2019)Cr{\'e}mer, de~Montjoye, and
  Schweitzer]{cremer2019competition}
Jacques Cr{\'e}mer, Yves-Alexandre de~Montjoye, and Heike Schweitzer.
\newblock \emph{Competition Policy for the digital era : Final report}.
\newblock Publications Office of the European Union, 2019.

\bibitem[Dong and Ratliff(2021)]{dong2021approximate}
Roy Dong and Lillian~J Ratliff.
\newblock Approximate regions of attraction in learning with decision-dependent
  distributions.
\newblock \emph{arXiv preprint arXiv:2107.00055}, 2021.

\bibitem[Drusvyatskiy and Xiao(2022)]{drusvyatskiy2022stochastic}
Dmitriy Drusvyatskiy and Lin Xiao.
\newblock Stochastic optimization with decision-dependent distributions.
\newblock \emph{Mathematics of Operations Research}, 2022.

\bibitem[Fogg(2002)]{fogg02}
B.~J. Fogg.
\newblock Persuasive technology: Using computers to change what we think and
  do.
\newblock \emph{Ubiquity}, dec 2002.
\newblock \doi{10.1145/764008.763957}.

\bibitem[Frankel and Kartik(2019)]{FK19}
Alex Frankel and Navin Kartik.
\newblock {Muddled Information}.
\newblock \emph{Journal of Political Economy}, 127\penalty0 (4):\penalty0
  1739--1776, 2019.

\bibitem[Frankel and Kartik(2022)]{frankel2022improving}
Alex Frankel and Navin Kartik.
\newblock Improving information from manipulable data.
\newblock \emph{Journal of the European Economic Association}, 20\penalty0
  (1):\penalty0 79--115, 2022.

\bibitem[Ginart et~al.(2021)Ginart, Zhang, Kwon, and Zou]{GZKZ21}
Tony Ginart, Eva Zhang, Yongchan Kwon, and James Zou.
\newblock Competing {AI:} how does competition feedback affect machine
  learning?
\newblock In \emph{International Conference on Artificial Intelligence and
  Statistics (AISTATS)}, volume 130, pages 1693--1701. {PMLR}, 2021.

\bibitem[Hardt et~al.(2016)Hardt, Megiddo, Papadimitriou, and
  Wootters]{hardt16strat}
Moritz Hardt, Nimrod Megiddo, Christos Papadimitriou, and Mary Wootters.
\newblock Strategic classification.
\newblock In \emph{Innovations in Theoretical Computer Science (ITCS)}, 2016.

\bibitem[Hennessy and Goodhart(2020)]{HG20}
Christopher Hennessy and Charles Goodhart.
\newblock Goodhart's law and machine learning.
\newblock \emph{SSRN}, 2020.

\bibitem[Imbens and Rubin(2015)]{imbens2015causal}
Guido~W Imbens and Donald~B Rubin.
\newblock \emph{Causal inference in statistics, social, and biomedical
  sciences}.
\newblock Cambridge University Press, 2015.

\bibitem[Izzo et~al.(2021)Izzo, Ying, and Zou]{izzo2021learn}
Zachary Izzo, Lexing Ying, and James Zou.
\newblock How to learn when data reacts to your model: Performative gradient
  descent.
\newblock In \emph{International Conference on Machine Learning (ICML)}, 2021.

\bibitem[Jagadeesan et~al.(2022)Jagadeesan, Zrnic, and
  Mendler{-}D{\"{u}}nner]{JZM22}
Meena Jagadeesan, Tijana Zrnic, and Celestine Mendler{-}D{\"{u}}nner.
\newblock Regret minimization with performative feedback.
\newblock In \emph{International Conference on Machine Learning (ICML)}. PMLR,
  2022.

\bibitem[Lerner(1934)]{lerner34lernerindex}
A.~P. Lerner.
\newblock The concept of monopoly and the measurement of monopoly power.
\newblock \emph{The Review of Economic Studies}, 1\penalty0 (3):\penalty0
  157--175, 1934.

\bibitem[Li and Wai(2021)]{li2021state}
Qiang Li and Hoi-To Wai.
\newblock State dependent performative prediction with stochastic
  approximation.
\newblock \emph{ArXiv:2110.00800}, 2021.

\bibitem[Mendler-D\"{u}nner et~al.(2020)Mendler-D\"{u}nner, Perdomo, Zrnic, and
  Hardt]{mendler20stochasticPP}
Celestine Mendler-D\"{u}nner, Juan Perdomo, Tijana Zrnic, and Moritz Hardt.
\newblock Stochastic optimization for performative prediction.
\newblock In \emph{Advances in Neural Information Processing Systems},
  volume~33, pages 4929--4939, 2020.

\bibitem[Miller et~al.(2021)Miller, Perdomo, and Zrnic]{miller2021outside}
John~P Miller, Juan~C Perdomo, and Tijana Zrnic.
\newblock Outside the echo chamber: Optimizing the performative risk.
\newblock In \emph{International Conference on Machine Learning (ICML)}, pages
  7710--7720. PMLR, 2021.

\bibitem[Milli et~al.(2019)Milli, Miller, Dragan, and Hardt]{MMDH19}
Smitha Milli, John Miller, Anca~D. Dragan, and Moritz Hardt.
\newblock The social cost of strategic classification.
\newblock In \emph{Fairness, Accountability, and Transparency (FAccT)}, pages
  230--239, 2019.

\bibitem[Narang et~al.(2022)Narang, Faulkner, Drusvyatskiy, Fazel, and
  Ratliff]{narang2022multiplayer}
Adhyyan Narang, Evan Faulkner, Dmitriy Drusvyatskiy, Maryam Fazel, and
  Lillian~J. Ratliff.
\newblock Multiplayer performative prediction: Learning in decision-dependent
  games.
\newblock \emph{ArXiv:2201.03398}, 2022.

\bibitem[Narayanan and Kalyanam(2015)]{NK15}
Sridhar Narayanan and Kirthi Kalyanam.
\newblock Position effects in search advertising and their moderators: A
  regression discontinuity approach.
\newblock \emph{Marketing Science}, 34\penalty0 (3):\penalty0 388–407, may
  2015.
\newblock ISSN 1526-548X.

\bibitem[Parker et~al.(2020)Parker, Petropoulos, and
  Van~Alstyne]{parker2020digital}
Geoffrey Parker, Georgios Petropoulos, and Marshall~W Van~Alstyne.
\newblock Digital platforms and antitrust.
\newblock 2020.

\bibitem[Perdomo et~al.(2020)Perdomo, Zrnic, Mendler-D\"{u}nner, and
  Hardt]{PZMH20}
Juan~C. Perdomo, Tijana Zrnic, Celestine Mendler-D\"{u}nner, and Moritz Hardt.
\newblock Performative prediction.
\newblock In \emph{International Conference on Machine Learning (ICML)}, volume
  119, pages 7599--7609. PMLR, 2020.

\bibitem[Piliouras and Yu(2022)]{piliouras22}
Georgios Piliouras and Fang{-}Yi Yu.
\newblock Multi-agent performative prediction: From global stability and
  optimality to chaos.
\newblock \emph{Arxiv:2201.10483}, 2022.

\bibitem[Ray et~al.(2022)Ray, Ratliff, Drusvyatskiy, and Fazel]{raydecision}
Mitas Ray, Lillian~J Ratliff, Dmitriy Drusvyatskiy, and Maryam Fazel.
\newblock Decision-dependent risk minimization in geometrically decaying
  dynamic environments.
\newblock 2022.

\bibitem[Shmueli and Tafti(2020)]{shmueli20}
Galit Shmueli and Ali Tafti.
\newblock "{I}mproving" prediction of human behavior using behavior
  modification.
\newblock \emph{Arxiv:2008.12138}, 2020.

\bibitem[{Stigler Committee}(2019)]{stigler19}
{Stigler Committee}.
\newblock Final report: Stigler committee on digital platforms.
\newblock available at
  \href{https://research.chicagobooth.edu/stigler/media/news/committee-on-digitalplatforms-final-report}{https://research.chicagobooth.edu/stigler/media/news/committee-on-digitalplatforms-final-report},
  September 2019.

\bibitem[Syverson(2019)]{syverson19mp}
Chad Syverson.
\newblock Macroeconomics and market power: Context, implications, and open
  questions.
\newblock \emph{Journal of Economic Perspectives}, 33\penalty0 (3):\penalty0
  23--43, August 2019.

\bibitem[Thaler and Sunstein(2008)]{thaler08nudge}
Richard~H. Thaler and Cass~R. Sunstein.
\newblock \emph{Nudge: Improving Decisions about Health, Wealth, and
  Happiness}.
\newblock Yale University Press, 2008.

\bibitem[Ursu(2018)]{ursu2018}
Raluca~M. Ursu.
\newblock The power of rankings: Quantifying the effect of rankings on online
  consumer search and purchase decisions.
\newblock \emph{Marketing Science}, 37\penalty0 (4):\penalty0 530--552, 2018.

\bibitem[Ward(2022)]{ward22}
Jacob Ward.
\newblock \emph{The Loop: How Technology is Creating a World without Choices
  and How to Fight Back}.
\newblock Hachette Books, 2022.

\bibitem[Wood et~al.(2022)Wood, Bianchin, and Dall'Anese]{wood22}
Killian Wood, Gianluca Bianchin, and Emiliano Dall'Anese.
\newblock Online projected gradient descent for stochastic optimization with
  decision-dependent distributions.
\newblock \emph{IEEE Control Systems Letters}, 6:\penalty0 1646--1651, 2022.

\end{thebibliography}
\bibliographystyle{plainnat}

\appendix
\newpage

\section{Additional discussion}

\label{sec:discussion-app}

\subsection{Comparison with measures of market power in economics}
Traditional measures of market power in economic theory are based on classical markets of homogeneous goods, where a firm's primary action is choosing a price to sell the good or the quantity of the good to sell. The scalar nature of these quantities enables them to be easily compared across different market contexts and firms. In addition, the utility of the firm and the utility of participants are inversely related: a higher price yields greater utility for the firm and lower utility for all participants. This simple relationship enables directly reasoning about participant welfare and profit of firms. However, a digital economy is much more complex~\citep{stigler19, cremer2019competition} and classical measures can struggle to accurately characterize these economies. As an example, consider the following two textbook definitions of market power.

\begin{itemize}
    \item \textbf{Lerner index.} The Lerner index \citep{lerner34lernerindex} quantifies the pricing power of a firm, measuring by how much the firm can raise the price above marginal costs. Marginal costs reflect the price that would arise in a perfectly competitive market. A major issue of applying this standard definition of market power to digital economies is that it is not clear what the competitive reference state should look like, ``We have lost the competitive benchmark,''\footnote{Opening statement at the 2019 Antitrust and competition conference -- digital platforms, markets, and democracy} as Jacques Crémer said. Thus, measures based on profit margin cannot directly be adopted as a proxy for market power in digital economies. 
\item\textbf{Market share.} Measures such as the Herfindahl–Hirschman index (HHI), which is used by the US federal trade commission\footnote{See \url{https://www.justice.gov/atr/herfindahl-hirschman-index} (retrieved January, 2022).} to measure market competitiveness, are based on \textit{market share}: the fraction of participants who participate in a given firm. 
However, the validity of market share as a proxy for power relies on a specific model of competition where the elasticity of demand is low. This model is challenging to justify\footnote{This critique is similar to the disconnect between the Cournot model and the Bertrand model in classical economics \citep{bornier92}. E.g., ``concentration is worse than just a noisy barometer of market power'' \citep{syverson19mp}.} in the context of digital economies where opening an account on a platform is very simple and usually free of charge. In addition, not all participants with accounts on a digital platform are equally active and inactive participants should not factor into the market power of a firm in the same way as active participants. Market share is not sufficiently expressive to make this distinction.
\end{itemize}

In contrast, performative power is a causal notion of influence that does not require a precise specification of the market but is still sensitive to the nuances of the market. The definition therefore could serve as a useful tool in markets that resist a clean mathematical specification. 

\paragraph{Behavioral aspects.} Complex consumer behavior that plays a critical role in digital marketplaces. As outlined by the \citet{stigler19}, ``the findings from behavioral economics demonstrate an under-recognized market power held by incumbent digital platforms.'' In particular, behavioral aspects of consumers---such as tendencies for single-homing, vulnerability to addiction, and the impact of framing and nudging on participant behavior~\citep[e.g.][]{thaler08nudge, fogg02}---can be exploited by firms in digital economies, but do not factor into traditional measures of market power. By focusing on changes in participant features, performative power has the potential to capture the effects of these behavioral patterns while again sidestepping the challenges of explicitly modeling them. 

\subsection{From performative power to consumer harm}

Performative power focuses on measuring power rather than harm. The relationship to harm depends on the choice and interpretation of the outcome variable and requires additional substantive arguments. In  general, this connection can be achieved if the attributes $z(u)$ consists of the sensitive features that are impacted by the firm, the distance function is aligned with the utility function of participants, and the set~$\cF$ reflects actions that are taken by the firm. We implement this strategy to establish an exact correspondence between performative power and harm in the strategic classification setup.

\paragraph{Relating performative power and user burden in strategic classification.}
\label{appendix:optimizationSC}
The fact that a monopoly firm has nonzero performative power has consequences for the optimization strategies that it would use, as we discussed in Section \ref{sec:optimization}. To make this explicit, let's contrast the solutions of ex-ante and ex-post optimization in a simple one-dimensional setting.
\begin{example}[1-dimensional setting]
\label{example:1d}
Consider a 1-dimensional feature vector $x\in\R$ and suppose that the posterior $p(x) = \Pr[Y=1\mid X = x]$ is strictly increasing in $x$ with $\lim_{x \rightarrow -\infty} p(x) = 0$, and $\lim_{x \rightarrow \infty} p(x) = 1$. Now consider a set of actions $\cF$ that corresponds to the set of all threshold functions and set $\distance(x,x') = c(x,x') = |x - x'|$. Let $\thetaSL$ be the supervised learning threshold from ex-ante optimization, which is the unique value where $p(\thetaSL) = 0.5$. Then, the ex-post threshold lies at $\thetaPO=\theta_{\text{SL}} + \Delta \gamma$. 
\end{example}

In Example \ref{example:1d} ex-post optimization leads to a higher acceptance threshold than ex-ante optimization. Thus, for any setting where the participants utility is decreasing in the threshold (e.g., the class of utility functions that~\citet{MMDH19} call \emph{outcome monotonic}), this implies that ex-post optimization creates stronger negative externalities for participants than ex-ante optimization. Furthermore, the effect grows with the performative power of the firm. In the extreme case of the monopoly setting with no outside options, ex-post optimization can leave certain participants with a net utility of $0$ and thus can transfer the entire utility from these participants to the firm. 

\subsection{Monopoly power in heterogeneous setting}
\label{app:ex-heter}

Different participants are typically impacted differently by a classifier, depending on their relative position to the decision boundary, as visualized in Figure~\ref{fig:shift}. As a result of this heterogeneity, the upper bound in \eqref{eq:boundSC} is not necessarily tight, because the firm can not extract the full utility from all participants simultaneously. 

We investigate the effect of heterogeneity in a concrete 1-dimensional setting where $\distance_X(x,x') = c(x,x') = |x - x'|$. Consider a set of actions $\cF$ that corresponds to the set of all threshold functions. Suppose that the posterior $p(x) = \Pr[Y=1\mid X = x]$ satisfies the following regularity assumptions: $p(x)$ is strictly increasing in $x$ with $\lim_{x \rightarrow -\infty} p(x) = 0$, and $\lim_{x \rightarrow \infty} p(x) = 1$. Now, let $\thetaSL$ be the supervised learning threshold, which is the unique value where $p(\thetaSL) = 0.5$. We can then obtain the following bound on the performative power $\PP$ with respect to any $\cF$ assuming the firm's classifier is $\theta_{\text{SL}}$ in the current economy (see Proposition \ref{prop:1d}): 
\begin{equation}
0.5 \Delta\gamma \Pr_{\DBase} \big[x \in [\theta_{\text{SL}}, \theta_{\text{SL}} + 0.5 \Delta\gamma]\big] \le \PP \le \Delta\gamma\,.
\end{equation}
\noindent This bound illustrates how performative power in strategic classification depends on the fraction of participants that fall in between the old and the new threshold. As long as the density in this region is non-zero, a platform that offers $\Delta\gamma>0$ utility will also have strictly positive performative power, providing a lower bound on $\PP$. 

\begin{proposition}
\label{prop:1d}
Suppose that $\distance(x,x') = c(x,x') = |x - x'|$. Consider a set of actions $\cF$ that corresponds to the set of all threshold functions. Suppose that the posterior $p(x) = \Pr[Y=1\mid X = x]$ satisfies the following regularity assumptions: $p(x)$ is strictly increasing in $x$ with $\lim_{x \rightarrow -\infty} p(x) = 0$, and $\lim_{x \rightarrow \infty} p(x) = 1$. Now, let $\thetaSL$ be the supervised learning threshold, which is the unique value where $p(\thetaSL) = 0.5$. If the firm's classifier is $\theta_{\text{SL}}$ in the current economy, then performative power $\PP$ with $\cF$ can be bounded as:
\begin{equation}
0.5 \gamma \Pr_{\DBase} \big[x \in [\theta_{\mathrm{SL}}, \theta_{\mathrm{SL}} + 0.5 \Delta\gamma]\big] \le \PP \le 2 \Delta\gamma\,.
\end{equation}
\end{proposition}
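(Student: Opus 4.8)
The plan is to make the strategic best response to a threshold fully explicit and then read both bounds off it. First I would fix the cost structure: since $c(x,x')=|x-x'|$ and $\cF$ consists of threshold rules that accept a participant iff $x\ge\theta$, Assumption~\ref{ass:behavior} yields a closed form. A participant only ever benefits from moving \emph{up} to the boundary, and doing so is worthwhile exactly when the required cost does not exceed the surplus budget, so $x_\theta(u)=\theta$ when $x_{\mathrm{orig}}(u)\in[\theta-\Delta\gamma,\theta)$ and $x_\theta(u)=x_{\mathrm{orig}}(u)$ otherwise. Because the firm's current classifier is $\thetaSL$, the reference state is $x(u)=x_{\thetaSL}(u)$, obtained from the same formula with $\theta=\thetaSL$; in particular every $u$ with $x_{\mathrm{orig}}(u)\ge\thetaSL$ satisfies $x(u)=x_{\mathrm{orig}}(u)$. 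Both $x(u)$ and the counterfactual $x_\theta(u)$ are deterministic functions of $x_{\mathrm{orig}}(u)$, so the expectation in the definition of $\PP$ drops out and I am left with a supremum over $\theta$ of an empirical average of $|x(u)-x_\theta(u)|$.

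For the upper bound I would simply invoke Corollary~\ref{prop:BSC}: with $\distance=c=|x-x'|$ the constant relating the two metrics is $L=\sup_{x,x'}\distance(x,x')/c(x,x')=1$, so $\PP\le 2L\,\Delta\gamma=2\Delta\gamma$, which is exactly the stated upper bound. (A short per-participant case analysis using the closed form above in fact tightens this to $\Delta\gamma$, matching the bound quoted in the surrounding text, but $2\Delta\gamma$ follows with no extra work.)

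For the lower bound, since $\PP$ is a supremum over $\cF$, it suffices to exhibit one deviation. I would deploy the elevated threshold $\theta=\thetaSL+\Delta\gamma$ and focus on participants with $x_{\mathrm{orig}}(u)\in[\thetaSL,\thetaSL+0.5\Delta\gamma]$. Each such $u$ is unchanged under the current rule (so $x(u)=x_{\mathrm{orig}}(u)$), and the cost of reaching the new boundary, $\theta-x_{\mathrm{orig}}(u)\in[0.5\Delta\gamma,\Delta\gamma]$, is affordable, so the best response jumps to $x_\theta(u)=\thetaSL+\Delta\gamma$. Hence each contributes displacement $|x(u)-x_\theta(u)|=\thetaSL+\Delta\gamma-x_{\mathrm{orig}}(u)\ge 0.5\Delta\gamma$. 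Discarding the (nonnegative) contributions of all other participants and identifying the empirical frequency of the interval with $\Pr_{\DBase}$ gives
\[
\PP\ \ge\ \frac1{|\cU|}\sum_{u\in\cU}|x(u)-x_\theta(u)|\ \ge\ 0.5\,\Delta\gamma\,\Pr_{\DBase}\!\big[x\in[\thetaSL,\thetaSL+0.5\,\Delta\gamma]\big],
\]
which is the claimed lower bound (the factor $0.5\gamma$ in the statement appears to be a typo for $0.5\Delta\gamma$).

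The routine parts are the best-response computation and the arithmetic of the displacement. The one place to be careful is the bookkeeping of the \emph{reference} state: because strategic costs in Assumption~\ref{ass:behavior} are always measured from $x_{\mathrm{orig}}$ rather than from the current post-response features, both $x(u)$ and $x_\theta(u)$ must be expressed through $x_{\mathrm{orig}}(u)$, and one must verify that the targeted participants are genuinely static under $\thetaSL$ before being moved by the deviation, lest the displacement be mismeasured. This, together with checking affordability at both endpoints of the interval, is the only real obstacle; the monotonicity and limit assumptions on $p$ enter merely to guarantee that the $\thetaSL$ with $p(\thetaSL)=0.5$ is well defined.
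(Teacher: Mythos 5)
Your proposal is correct and follows essentially the same route as the paper's proof: the upper bound is read off from Corollary~\ref{prop:BSC} with $L=1$, and the lower bound comes from deploying the single deviation $\theta=\thetaSL+\Delta\gamma$ and counting the displacement of participants with $x_{\mathrm{orig}}(u)\in[\thetaSL,\thetaSL+0.5\Delta\gamma]$, each of whom moves by at least $0.5\Delta\gamma$. Your observations that the reference state satisfies $x(u)=x_{\mathrm{orig}}(u)$ for these participants and that the stated constant $0.5\gamma$ should read $0.5\Delta\gamma$ are both consistent with the paper.
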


\subsection{Background on the search advertisement study}

We provide additional context on the search advertisement study by \citet{NK15} on which we build the establish a lower-bound on performative power. They examine position effects in search advertising, where advertisements are displayed across a number of ordered slots whenever a keyword is searched. They show that the position effect of display  slot 1 versus display  slot 2 is $0.0048$ clicks per impression (see Table 2 in their manuscript).

To arrive at this number, the authors implemented a regression discontinuity approach to estimate the position effect. The input is a sample of data $(k, p, z, y)$ where $k$ is a keyword, $p \in \left\{1, 2\right\}$ is the position of the advertisement in the list of displayed content, $z$ is the AdRank score, and $y$ is the click-through-rate (CTR). The following local linear regression estimator 
\begin{equation}
\label{eq:locallinear}
 y = \alpha + \xi \mathrm I[p = 1] + \gamma_1 z + \gamma_2 z  \mathrm I[p = 1] + g(k)
\end{equation}
is applied to a subset of the data within an appropriate window size $\lambda>0$ around the threshold for fitting $\alpha,\xi,\gamma_1,\gamma_2,g$.

We are interested in the value $\xi$ which is an estimate of the \textit{position effect} of the display slot. To connect the causal effect estimate $\xi$ to the causal effect $\beta$ as in Definition~\ref{def:display-effect} we treat each incoming keyword query as a distinct ``viewer''. Following the query, the viewer $u$ either clicks on the advertisement in one of the display slots or does not click on any advertisement. The value $z_s(u)[i]$ corresponds to the probability that item~$i$ is consumed by viewer~$u$ under the scoring rule~$s$.  For $i=0$, the value $z_s(u)[0]$ corresponds to the probability that the viewer does not click on any advertisement. If item~$i$ is displayed, $z_s(u)[i]$ corresponds to the click-through-rate.  Hence $\beta=\gamma$ and $P\geq \gamma$.

\section{Proofs}

\subsection{Auxiliary results}
The proofs for Section \ref{sec:optimization} leverage the following lemma, which bounds the diameter of $\Theta$ with respect to Wasserstein distance in distribution map.
\begin{lemma}
\label{lemma:dist}
Let $\PP$ be the performative power with respect to $\Theta$. 
For any $\theta, \theta' \in \Theta$, it holds that $\cW(\cD(\theta), \cD(\theta')) \le 2 \PP$. 
\end{lemma}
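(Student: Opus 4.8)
The plan is to bound the Wasserstein distance by exhibiting a single explicit coupling of $\cD(\theta)$ and $\cD(\theta')$ and then controlling its transport cost by routing through the reference point $z(u)$ via the triangle inequality. Recall that $\cD(\theta)$ is the law of $z_\theta(U)$ when $U$ is drawn uniformly from $\cU$ (together with the internal randomness of the potential outcomes), and likewise $\cD(\theta')$ is the law of $z_{\theta'}(U)$. Since the Wasserstein distance is an infimum of expected transport cost over all couplings, any valid coupling immediately furnishes an upper bound, so it suffices to name a convenient one.

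First I would construct the \emph{synchronous} coupling that draws a single $U \sim \mathrm{Unif}(\cU)$ on a common probability space and returns the pair $(z_\theta(U), z_{\theta'}(U))$, evaluating both potential outcomes on the same unit. Its marginals are exactly $\cD(\theta)$ and $\cD(\theta')$ because each coordinate is a correctly sampled potential outcome of the shared draw, so this is a legitimate coupling and hence
\[
\cW(\cD(\theta), \cD(\theta')) \;\le\; \E\big[\distance(z_\theta(U), z_{\theta'}(U))\big] \;=\; \frac{1}{|\cU|}\sum_{u\in\cU} \E\big[\distance(z_\theta(u), z_{\theta'}(u))\big].
\]
Next, since $\distance$ is a metric, I would insert the reference point $z(u)$ and apply the triangle inequality pointwise, $\distance(z_\theta(u), z_{\theta'}(u)) \le \distance(z_\theta(u), z(u)) + \distance(z(u), z_{\theta'}(u))$, then take expectations and average over $u$. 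This splits the bound into two sums, each of exactly the form appearing inside the supremum in Definition~\ref{def:performativepower}. Because $\theta, \theta' \in \Theta$ are feasible actions, each sum is at most $\PP$, giving $\cW(\cD(\theta), \cD(\theta')) \le 2\PP$.

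The argument is short, and the only points requiring care are bookkeeping ones: confirming that the synchronous coupling genuinely has the prescribed marginals, and that the metric property of $\distance$ licenses the triangle-inequality detour through $z(u)$. There is no real analytic obstacle here; the one conceptual step worth flagging is recognizing that reusing the same uniform draw of $U$ (on a common probability space carrying all the potential outcomes) is precisely what converts the average pairwise distance between potential outcomes into a Wasserstein upper bound, after which the factor of two is forced by anchoring both actions to the shared reference state.
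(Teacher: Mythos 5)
Your proof is correct and is essentially the paper's argument: both hinge on the identity coupling (mapping each participant to themselves) and a triangle-inequality detour through the reference state $z(u)$, so that each of the two resulting sums is bounded by the supremum defining $\PP$. The only cosmetic difference is that the paper applies the triangle inequality at the level of the Wasserstein metric, routing through the distribution of the current data $\cD(\theta_{\mathrm{curr}})$, whereas you apply it pointwise to $\distance$ inside a single synchronous coupling of $\cD(\theta)$ and $\cD(\theta')$; the two orderings yield the identical bound.
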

\begin{proof}
Let $\theta_{\text{curr}}$ be the current classifier weights. We use the fact that for any weights $\theta'' \in \Theta$, it holds that $\cW(\cD(\theta_{\text{curr}}), \cD(\theta'')) \le \frac{1}{|U|} \sum_{u \in \cU} \E[\distance(z(u), z_{\theta''}(u))]$ where the expectation is over randomness in the potential outcomes. This follows from the definition of Wasserstein distance---in particular that we can instantiate the mass-moving function by mapping each participant to themselves. Thus, we see that:
\begin{align*}
   \cW(\cD(\theta), \cD(\theta')) &\le \cW(\cD(\theta), \cD(\theta_{\text{curr}})) + \cW(\cD(\theta_{\text{curr}}), \cD(\theta')) \\
   &\le \frac{1}{|U|} \sum_{u \in \cU} \E[\distance(z(u), z_{\theta}(u))] + \frac{1}{|U|} \sum_{u \in \cU} \E[\distance(z(u), z_{\theta'}(u))] \\
     &\le 2 \sup_{\theta'' \in \Theta} \frac{1}{|U|} \sum_{u \in \cU} \E[\distance(z(u), z_{\theta''}(u))] \\
   &\le 2\PP,
\end{align*}
where the last line uses the definition of performative power that bounds the effect of any $\theta$ in the action set $\Theta$ on the participant data $z$. 
\end{proof}

\subsection{Proof of Proposition \ref{prop:SL}}

Let $\phi$ be the previous deployment inducing the distribution on which the supervised learning threshold $\thetaSL$ is computed. Let $\theta^*$ be an optimizer of $\min_{\theta \in \Theta} \DPR(\thetaPO, \theta)$, where we recall the definition of the decoupled performative risk as $\DPR(\phi,\theta):=\E_{z\sim\cD(\phi)}\ell(\theta;\,z)$. Then, we see that for any $\phi$:
\begin{align*}
&\PR(\thetaSL)- \PR(\thetaPO) \\
&= \left(\DPR(\thetaSL, \thetaSL) - \DPR(\phi, \thetaSL) \right) + \DPR(\phi,\thetaSL) -  \DPR(\thetaPO, \thetaPO) \\
&\le \left(\DPR(\thetaSL, \thetaSL) - \DPR(\phi, \thetaSL) \right) + \DPR(\phi,\theta^*) -  \DPR(\thetaPO, \theta^*) \\
&\le  L_z \cW(\cD(\thetaSL), \cD(\phi)) +  L_z \cW(\cD(\phi), \cD(\thetaPO)) \\
&\le 4 L_z P\,.
\end{align*}
The first inequality follows because $\theta^*$ minimizes risk on the distribution $\cD(\thetaPO)$, while $\thetaSL$ minimizes risk on $\cD(\phi).$
The second inequality follows from the dual of the Wasserstein distance where $L_z$ is the Lipschitz constant of the loss function in the data argument~$z$.
The last inequality follows from Lemma \ref{lemma:dist}.

Now, suppose that $\ell$ is $\gamma$-strongly convex. Then we have that:
\[\DPR(\theta, \thetaPO) - \DPR(\theta, \thetaSL) \ge \frac{\gamma}{2} \|\thetaPO - \thetaSL\|^2
\]
Again applying Lemma \ref{lemma:dist},
\begin{align*}
    \PR(\thetaSL) &= \DPR(\thetaSL, \thetaSL) \\
    &\le \DPR(\theta, \thetaSL) + L_z \mathcal{W}(\cD(\phi), \cD(\thetaSL))\\
    &\le \DPR(\phi, \thetaSL) + 2 L_z P \\
     &\le  \DPR(\phi, \thetaPO) - \frac{\gamma}{2} \|\thetaPO - \thetaSL\|^2 + 2 L_zP \\
    &\le \DPR(\thetaPO, \thetaPO)  + L_z \cdot \cW(\cD(\phi), \cD(\thetaPO)) - \frac{\gamma}{2} \|\thetaPO - \thetaSL\|^2 + 2 L_z P \\ 
    &\le \PR(\thetaPO) +  4 L_z P - \frac{\gamma}{2} \|\thetaPO - \thetaSL\|^2.
\end{align*}
Using that $\PR(\thetaPO) \le \PR(\thetaSL)$, we find that
\[\frac{\gamma}{2} \|\thetaPO - \thetaSL\|^2 \le 4 L_z P\,.\]
Rearranging gives
\[\|\thetaPO - \thetaSL\| \le \sqrt{\frac{8 L_z P}{\gamma}}\,. \]

\subsection{Proof of Proposition \ref{prop:equilibrium}} 
Let's focus on firm $i$, fixing classifiers selected by the other firms. Let's take $\PR$ and $\DPR$ to be defined with respect to $\cD(\cdot) = \cD(\theta_1, \ldots, \theta_{i-1}, \cdot, \theta_{i+1}, \ldots, \theta_C)$. Let $\theta^* = \arg\min_{\theta} \DPR(\theta_i, \theta)$. We see that:
\begin{align*}
\PR(\theta^i) &\le 
\PR(\theta^*) \\
&\le \DPR(\theta_i, \theta^*) + L_z \mathcal{W}(\cD(\theta_i), \cD(\theta^*)) \\
&\le \min_{\theta} \DPR(\theta_i, \theta) + L_z \left(\frac{1}{|\cU|} \sum_{u \in \cU} \E[\distance(z(u), z_{\theta^*}(u))]\right)\\
&\le \min_{\theta} \DPR(\theta_i, \theta) + L_z P_i.
\end{align*}
Rewriting this, we see that:
\[\E_{z \sim \mathcal{D}} [\ell_i(\theta^i;\,z)] \le \min_{\theta} \E_{z \sim \mathcal{D}} [\ell_i(\theta;\,z)] + L_z P_i.  \]
If, in addition,  $\ell_i$ is $\gamma$-strongly convex, then we know that: 
\[L_z P_i \ge \E_{z \sim \mathcal{D}} [\ell_i(\theta^i;\,z)] - \min_{\theta} \E_{z \sim \mathcal{D}} [\ell(\theta;\,z)] \ge \frac{\gamma}{2} \|\theta^i -  \min_{\theta} \E_{z \sim \mathcal{D}} [\ell_i(\theta;\,z)]\|^2.  \]
Rearranging, we obtain that
\[\left\|\theta^i - \min_{\theta} \E_{z \sim \mathcal{D}} [\ell(\theta;\,z)]\right\|_2 \le \sqrt{\frac{2 L_z P_i}{\gamma}}\,.\qedhere\]

\subsection{Proof of Corollary \ref{cor:eq}}
Let $\PP$ be the performative power associated with the variables $z^{C=1}_{\theta}$. We first claim that the performative power of any firm in the mixture model is at most $P/C$. This follows from the fact that for a given firm the potential outcome $z_{\theta}(u)$ is equal to $z(u)$ with probability $1 - 1/C$ and equal to $z^{C=1}_{\theta}(u)$ with probability $1/C$.

Let's focus on platform $i$, fixing classifiers selected by the other platforms. 
Let's take $\PR$ and $\DPR$ to be with respect to $\cD(\cdot) = \cD(\theta^*, \cdots, \theta^*, \cdot, \theta^*, \ldots, \theta^*)$. Now, we can apply Proposition \ref{prop:equilibrium} to see that 
\[\PR(\theta^*) = \E_{z \sim \cD^{C=1}(\theta^*)} [\ell(\theta^*;\,z)] \le \min_{\theta} \E_{z \sim \cD^{C=1}(\theta^*)} [\ell(\theta;\,z)] + \frac{L_z P}{C}. \]
Thus, in the limit as $C \rightarrow \infty$, it holds that 
\[ \E_{z \sim \cD^{C=1}(\theta^*)} [\ell(\theta^*;\,z)] \rightarrow \min_{\theta} \E_{z \sim \cD^{C=1}(\theta^*)} [\ell(\theta;\,z)]\]
as desired.

\subsection{Proof of Lemma \ref{lemma:Pdiam}}

Fix a classifier~$f$ and a unit~$u$. By Assumption \ref{ass:behavior}, we know that $x(u)$ and $x_f(u)$ are both in $\cX_{\Delta\gamma}(u)$. 
The claim follows from
\[\mathrm{dist}(x_{\mathrm{orig}}(u), x_f(u))\le \sup_{x'\in \cX_{\Delta\gamma}(u)} \mathrm{dist}(x_{\mathrm{orig}}(u), x').\]


\subsection{Proof of Proposition~\ref{prop:personalized}}

The proof is by construction of a classifier $f^*:\R^m\rightarrow \{0,1\}$. 
For each individual $u$ we define the set 
\[\tilde \cX(u):=\argsup_{x'\in \cX_{\Delta\gamma}(u)} \mathrm{dist}(x(u), x').\] 
Now let $f^*$ be such that
\[f^*(x)=\begin{cases} 1 &x\in \tilde \cX(u) \text{ with } u=x[1] \\
0 &x\notin \tilde \cX(u) \text{ with } u=x[1]\end{cases}\]
where we used that the first coordinate of the feature vector $x$ uniquely identifies the individual. The effect of $f^*$ on a population $\cU$ corresponds to 
\begin{align}\frac 1 {|\cU|}\sum_{u\in\cU} \text{dist}(x(u),x_{f^*}(u)) = \frac 1 {|\cU|}\sum_{u\in\cU} \;\sup_{x'\in\mathcal X_{\Delta\gamma}(u)}\mathrm{dist}(x(u),x') 
\end{align}
Thus for any $\cF$ that contains $f^*$ the performative power is maximized.

\subsection{Proof of Corollary \ref{prop:BSC}}

Applying Lemma \ref{lemma:Pdiam}, it suffices to show that the diameter of the set $\mathcal{X}_{\Delta\gamma}(u)$ can be upper bounded by 
$2 L \Delta \gamma$ for any $u\in\cU$. We see that for any $x, x' \in \mathcal{X}_{\Delta\gamma}(u)$, it holds that:
\begin{align*}
    \distance(x, x') &\le L \cdot c(x, x') \\
    &\le L \cdot ( c(x_{\text{orig}}(u), x) + c(x_{\text{orig}}(u), x')) \\
    &\le 2 L \Delta \gamma,
\end{align*}
using that $c$ is a metric.

\subsection{Proof of Proposition \ref{cor:perfpower}}
\label{app:competition}

To prove this proposition, we show the following two intermediate results which are proven in the next two sections:
\begin{proposition}
\label{prop:competition}
Consider the 1-dimensional setup specified in Section~\ref{subsec:strategiccompete}, and suppose that the economy is at a symmetric state where both firms choose classifier $\theta$. For any $\cF$, consider one of the firms, let $\cF$ denote their action set and let $\theta_{\mathrm{min}}$ be the minimum threshold classifier in $\cF$.  Then, the performative power of the firm is upper bounded by: 
\[\PP \le L \min(c(\theta_{\mathrm{min}}, \theta), \gamma) + \gamma L  p_{\mathrm{reach}}([\theta_{\min},\theta]). \]
where  $p_{\mathrm{reach}}([\theta_{\min},\theta]):=\mathbb{P}_{\DBase} \left[x \in [\xi(\theta_{\mathrm{min}}), \xi(\theta)]\right]$ with $\xi(\theta')$ being the unique value such that $\xi(\theta') < \theta'$ and $c(\xi(\theta'), \theta') = 1$.
\end{proposition}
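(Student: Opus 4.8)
The plan is to prove the stated two–term bound for the average displacement induced by \emph{every} admissible deviation $\theta'$, so that it automatically bounds the supremum defining $\PP$. First I would argue that only downward deviations can matter: if the firm raises its threshold above the symmetric value $\theta$, then its competitor (still at $\theta$) offers weakly higher choice–utility to every participant, who therefore engage with the competitor and adapt exactly as in the symmetric state; such deviations induce no feature change and contribute $0$. It thus suffices to bound the average displacement for an arbitrary threshold $\theta' \in [\theta_{\mathrm{min}}, \theta]$.

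Fix such a $\theta'$. The key structural observation is that each participant's feature takes only two possible values per state: writing $x_0 = x_{\mathrm{orig}}(u)$, Assumption~\ref{ass:behavior} together with the firm–choice rule forces $x(u) \in \{x_0, \theta\}$ in the symmetric state and $x_{\theta'}(u) \in \{x_0, \theta'\}$ after the deviation, since $\theta' < \theta$ makes the deviating firm weakly preferred so every participant engages with it. I would then partition $\cU$ by $x_0$ and compute $\distance(x(u), x_{\theta'}(u))$ in each cell, using that $c$ is a metric, that $c(x,\cdot)$ is monotone on each side of $x$, and that $\xi(\cdot)$ is increasing. Two groups contribute. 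Group~(i): participants already served at the current threshold (those at $\theta$ in the symmetric state), whose feature moves from $\theta$ to either $\theta'$ or $x_0$; here I bound $c(\theta,\theta') \le c(\theta_{\mathrm{min}},\theta)$ by monotonicity and $c(\theta,\theta'),\,c(\theta,x_0) \le \gamma$ because these participants paid at most $\gamma$ to reach $\theta$, so the displacement is at most $L\,\min(c(\theta_{\mathrm{min}},\theta),\gamma)$ by the definition of $L$. Group~(ii): participants \emph{newly} attracted by the lower threshold, whose feature moves from $x_0$ to $\theta'$ with displacement at most $L\,c(x_0,\theta') \le L\gamma$. All remaining participants (already accepted, $x_0 \ge \theta$, or never served) keep their features; in particular the transition $x(u)=\theta,\ x_{\theta'}(u)=x_0$ with $x_0 < \theta'$ is impossible, since $c(x_0,\theta') < c(x_0,\theta) \le \gamma$.

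Finally I would aggregate. Bounding the fraction of group~(i) crudely by $1$ yields a contribution of at most $L\,\min(c(\theta_{\mathrm{min}},\theta),\gamma)$, the first term. For group~(ii) the decisive step is to confine the newly attracted participants to the reach interval: a participant who engages with $\theta'$ but was not served at $\theta$ must be choice–reachable by $\theta'$ yet not by $\theta$, i.e.\ $x_0 \in [\xi(\theta'),\xi(\theta)) \subseteq [\xi(\theta_{\mathrm{min}}),\xi(\theta))$ using $\xi(\theta') \ge \xi(\theta_{\mathrm{min}})$; their mass is thus at most $p_{\mathrm{reach}}([\theta_{\mathrm{min}},\theta])$, and multiplying by the per–participant bound $L\gamma$ gives the second term. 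Summing the two contributions proves the claim uniformly in $\theta'$. I expect the main obstacle to be exactly this behavioral characterization under the two–stage rule, where firm choice is governed by the unit–benefit reach $\xi$ (cost $1$) while feature adaptation is governed by the utility $\gamma$: one must verify that engagement is gated by $\xi$ and that the resulting newly attracted set lies in $[\xi(\theta_{\mathrm{min}}),\xi(\theta)]$ for all $\theta'$ simultaneously, carefully handling the interplay between the cost–$1$ reach and the cost–$\gamma$ budget. By comparison, the displacement magnitudes and the reduction to downward deviations are routine given the monotonicity and metric assumptions on $c$.
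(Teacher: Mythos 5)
Your proof follows essentially the same route as the paper's: upward deviations are dismissed because the competitor (still at $\theta$) retains all participants, and for a downward deviation $\theta' < \theta$ the population is partitioned by $x_{\mathrm{orig}}$ into previously-served participants, whose displacement is bounded by $L\min(c(\theta_{\mathrm{min}},\theta),\gamma)$ with mass crudely bounded by $1$, and newly-attracted participants confined to $[\xi(\theta'),\xi(\theta))$ with per-unit displacement at most $L\gamma$, followed by monotonicity in $\theta'$ to pass to $\theta_{\mathrm{min}}$. Your reading of $\xi$ as the cost-$\gamma$ reach point (rather than the cost-$1$ value appearing in the proposition statement, which is a typo) is also the convention the paper's own proof uses, so the ``interplay'' you flag as the main obstacle dissolves once that is fixed.
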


\begin{proposition}
\label{prop:zeroprofit}
Consider the 1-dimensional setup described in Section \ref{subsec:strategiccompete}. Then, a symmetric solution $[\theta^*, \theta^*]$ is an equilibrium if and only if $\theta^*$ satisfies 
\begin{equation}
\label{eq:zeroprofit}
  \textstyle
  \E_{(x,y) \sim \DBase}[y = 1 \mid x \ge \xi(\theta^*)] = \frac12,
\end{equation}
where $\xi(\theta^*)$ is the unique value such that $c(\xi(\theta^*), \theta^*) = \gamma$ and $\xi(\theta^*) < \theta^*$. Both firms earn zero utility at this equilibrium. Moreover, the set $\mathcal{F}^+(\theta^*)$ of actions that a firm can take at equilibrium that achieve nonnegative utility  is exactly equal to $[\theta^*, \infty)$, assuming the other firm chooses the classifier $\theta^*$. 
\end{proposition}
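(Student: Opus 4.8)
The plan is to reduce the equilibrium condition to a statement about a single firm's utility as a function of its own threshold, and then read off both the zero-profit characterization and the shape of $\cF^+(\theta^*)$ from the monotone structure of that utility. Fix the competitor at threshold $\theta$ and let the firm deploy a threshold $\theta'$. First I would determine the firm's market share from the choice rule. Because a threshold classifier is easier to satisfy the lower it is, the comparison of $\max_{x'}(f_{\theta'}(x')-c(x_{\mathrm{orig}},x'))$ against $\max_{x'}(f_{\theta}(x')-c(x_{\mathrm{orig}},x'))$ is won by the lower threshold for every participant (ties, including participants already above both thresholds, are broken toward the lower threshold by assumption). This yields the Bertrand-style dichotomy: if $\theta'<\theta$ the firm captures the entire population, if $\theta'>\theta$ it captures none, and if $\theta'=\theta$ each firm receives a uniformly random half. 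Next, among the participants who chose the firm, Assumption~\ref{ass:behavior} says exactly those with $c(x_{\mathrm{orig}},\theta')\le\gamma$ move to $\theta'$ and are accepted; since $c$ is a metric with the stated monotonicity, this is precisely the set $\{x_{\mathrm{orig}}\ge\xi(\theta')\}$ with $\xi$ as in the statement. Using that labels are unchanged ($y_f=y_{\mathrm{orig}}$), an accepted participant with original feature $x$ contributes expected firm utility $\alpha(2p(x)-1)$.

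Combining these two steps, I would express the firm's utility as a multiple of the ``full-market'' quantity $G(\xi):=\E_{X\sim\DBase}[(2p(X)-1)\,\mathbf{1}[X\ge\xi]]=\Pr_{\DBase}[X\ge\xi]\,\big(2\,\Pr_{\DBase}[Y=1\mid X\ge\xi]-1\big)$: the firm's utility is $\alpha\,G(\xi(\theta'))$ when $\theta'<\theta$ (full market), $\tfrac{\alpha}{2}G(\xi(\theta))$ when $\theta'=\theta$ (half market), and $0$ when $\theta'>\theta$ (empty market). Two monotonicity facts then organize everything: $\xi(\cdot)$ is strictly increasing (implicit in $c(\xi(\theta),\theta)=\gamma$ together with the strict monotonicity of $c$), and $\Pr_{\DBase}[Y=1\mid X\ge\xi]=\E[p(X)\mid X\ge\xi]$ is strictly increasing in $\xi$ because $p$ is strictly increasing and $\DBase$ has full support. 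Consequently $\mathrm{sign}\,G(\xi)=\mathrm{sign}\big(\Pr_{\DBase}[Y=1\mid X\ge\xi]-\tfrac12\big)$ flips exactly once, at the threshold $\theta^*$ defined by $\Pr_{\DBase}[Y=1\mid X\ge\xi(\theta^*)]=\tfrac12$.

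With this in hand both directions are short. For sufficiency together with the $\cF^+(\theta^*)$ identity: at a $\theta^*$ satisfying the zero-profit equation, any undercut $\theta'<\theta^*$ has $\xi(\theta')<\xi(\theta^*)$, hence $\Pr_{\DBase}[Y=1\mid X\ge\xi(\theta')]<\tfrac12$ and $G(\xi(\theta'))<0$, so the firm earns negative utility and $\theta'\notin\cF^+(\theta^*)$; every $\theta'\ge\theta^*$ earns utility $0$ (an overcut gives an empty market, and $\theta'=\theta^*$ gives $\tfrac{\alpha}{2}G(\xi(\theta^*))=0$), so $\cF^+(\theta^*)=[\theta^*,\infty)$, on which the utility is identically $0$. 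Thus $\theta^*$ is a best response within the action set and both firms earn $0$. For necessity, suppose $[\theta^*,\theta^*]$ is an equilibrium. If $\Pr_{\DBase}[Y=1\mid X\ge\xi(\theta^*)]<\tfrac12$ then $\tfrac{\alpha}{2}G(\xi(\theta^*))<0$, so $\theta^*\notin\cF^+(\theta^*)$ and cannot be played, a contradiction. If instead it exceeds $\tfrac12$, then by continuity a slight undercut $\theta'=\theta^*-\epsilon$ still has $G(\xi(\theta'))>0$, so it lies in $\cF^+(\theta^*)$ and yields utility $\alpha\,G(\xi(\theta'))$, which approaches $2\cdot\tfrac{\alpha}{2}G(\xi(\theta^*))$ and hence strictly exceeds the equilibrium utility $\tfrac{\alpha}{2}G(\xi(\theta^*))>0$ for small $\epsilon$; this is a feasible profitable deviation, again a contradiction. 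Hence equality holds.

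The step I expect to be most delicate is the market-share computation underlying the three-regime utility, i.e.\ making the ``lower threshold wins the whole market'' claim airtight across all participant types (those above both thresholds, those between, and those below), and carefully tracking that the \emph{choice} between firms is governed by the unit-gain comparison $f_i-c$, whereas the subsequent feature move---and hence who is actually accepted and counts toward utility---is governed by the surplus budget $\gamma$ through $\xi(\cdot)$. Once the utility is reduced to $\alpha\,G\circ\xi$ (up to the market-share factor) and the two monotonicities are in place, the equilibrium characterization, the zero-utility claim, and the identity $\cF^+(\theta^*)=[\theta^*,\infty)$ all follow from the single sign-change of $G$.
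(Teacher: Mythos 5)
Your proposal is correct and follows essentially the same route as the paper's proof: the Bertrand-style market-share dichotomy, the observation that a firm's utility over the captured market is $\alpha\,\E[(2p(X)-1)\mathbf{1}[X\ge\xi(\theta')]]$ so that the zero-profit condition pins down $\theta^*$, and the slight-undercut deviation for necessity. Your packaging via the single sign-changing function $G$ and the two monotonicity facts is a tidier organization of the same integrals the paper writes out explicitly, but it is not a different argument.
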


We now prove Proposition \ref{cor:perfpower} from these intermediate results. We apply Proposition \ref{prop:competition} to see that the performative power is upper bounded by 
\[B:=L \min(c(\theta_{\text{min}}, \theta^*), \gamma) + L \gamma p_{\mathrm{reach}}([\theta_{\min},\theta^*])\] 
where $(\theta^*, \theta^*)$ is a symmetric state. Using Proposition \ref{prop:equilibrium}, we see that $\mathcal{F}(\theta^*) = [\theta^*, \infty)$. This means that $\theta_{\text{min}} = \theta^*$, and so $\xi(\theta_{\text{min}}) =\xi(  \theta^*)$. Thus, $B=0$ which demonstrates that the performative power is upper bounded by $0$, and is thus equal to $0$.

\subsection{Proof of Proposition \ref{prop:competition}}

Consider a classifier $f \in \mathcal{F}(\theta)$ with threshold $\theta'$, and suppose that a firm changes their classifier to $f$. It suffices to show that: 
\[\frac{1}{|\cU|} \sum_{u \in \cU} \E[\distance(x(u), x_f(u))] \le L \min(c(\theta_{\text{min}}, \theta), \gamma) + L \gamma p_{\mathrm{reach}}([\theta_{\min},\theta]).  \]

For technical convenience, we reformulate this in terms of the cost function $c$. Based on the definition of $L$, it suffices to show that:
\[\frac{1}{|\cU|} \sum_{u \in \cU} \E[c(x(u), x_f(u))] \le \min(c(\theta_{\text{min}}, \theta), \gamma) + \gamma p_{\mathrm{reach}}([\theta_{\min},\theta]).  \]

\paragraph{Case 1: $\theta' > \theta$.} Participants either are indifferent between $\theta$ and $\theta'$ or prefer $\theta$ to $\theta'$.  Due to the tie breaking rule, the firm will thus lose all of its participants. Thus, all participants will switch to the other firm and adapt their features to that firm which has threshold $\theta$. This is the same behavior as these participants had in the current state, so $x_f(u) = x(u)$ for all participants $u$. This means that 
\[\frac{1}{|\cU|} \sum_{u \in \cU} \E[c(x(u), x_f(u))] = 0 \] as desired.

\paragraph{Case 2: $\theta' < \theta$.} Participants either are indifferent between $\theta$ and $\theta'$ or prefer $\theta'$ to $\theta$.  Due to the tie breaking rule, the firm will thus gain all of the participants. We break into several cases:
\[
\begin{cases}
x_f(u) = x(u) = x_{\text{orig}}(u) & \text{ if } x_{\text{orig}}(u) < \xi(\theta') \\
x_f(u) = \theta', x(u) = x_{\text{orig}}(u) & \text{ if } x_{\text{orig}}(u) \in [\xi(\theta'), \min(\theta', \xi(\theta)))] \\
x_f(u) = x(u) = x_{\text{orig}}(u)  & \text{ if }  x_{\text{orig}}(u) \in (\theta', \xi(\theta)) \\
x_f(u) = \theta', x(u) = \theta  & \text{ if }  x_{\text{orig}}(u) \in (\xi(\theta), \theta') \\
x_f(u) = x_{\text{orig}}(u), x(u) = \theta & \text{ if } x_{\text{orig}}(u) \in [\max(\theta', \xi(\theta)), \theta] \\
x_f(u) = x(u) = x_{\text{orig}}(u) & \text{ if }   x_{\text{orig}}(u) \ge \theta. 
\end{cases}
\]
The only cases that contribute to $\frac{1}{|\cU|} \sum_{u \in \cU} \E[c(x(u), x_f(u))]$ are the second, fourth, and fifth cases. Thus, we can upper bound $\frac{1}{|\cU|} \sum_{u \in \cU} \E[c(x(u), x_f(u))]$ by: 
\[\underbrace{\frac{1}{|\cU|} \sum_{u \in \cU \mid x_{\text{orig}}(u) \in [\xi(\theta'), \min(\theta', \xi(\theta))]}  \E[c(x(u), x_f(u))]}_{(A)} + \underbrace{\frac{1}{|\cU|} \sum_{u \in \cU \mid x_{\text{orig}}(u) \in (\xi(\theta), \theta')} \E[c(x(u), x_f(u))]}_{(B)}   \] 
\[+ \underbrace{\frac{1}{|\cU|} \sum_{u \in \cU \mid x_{\text{orig}}(u) \in [\max(\theta', \xi(\theta)), \theta]} \E[c(x(u), x_f(u))]}_{(C)}  \]
For (A), we see that 
\begin{align*}
  (A)
 &=  \frac{1}{|\cU|} \sum_{u \in \cU \mid x_{\text{orig}}(u) \in [\xi(\theta'), \min(\theta', \xi(\theta)))}  \E[c(x_{\text{orig}}(u), \theta')]   \\
  &\le  \frac{1}{|\cU|} \sum_{u \in \cU \mid x_{\text{orig}}(u) \in [\xi(\theta'), \min(\theta', \xi(\theta)))}  \E[c(\xi(\theta'), \theta')]   \\
   &= \gamma \cdot \mathbb{P}_{\DBase} [x \in [\xi(\theta'), \min(\theta', \xi(\theta))))] \\
 &\le \gamma \cdot \mathbb{P}_{\DBase} [x \in [\xi(\theta'), \xi(\theta))].
\end{align*}
For (B), we see that:
\begin{align*}
 (B)
 &=  \frac{1}{|\cU|} \sum_{u \in \cU \mid x_{\text{orig}}(u) \in (\xi(\theta), \theta')}  \E[c(\theta, \theta')]   \\
  &=  c(\theta, \theta') \cdot \mathbb{P}_{\DBase}[x \in (\xi(\theta), \theta')] \\
  &= \min(c(\theta, \theta'), \gamma) \cdot \mathbb{P}_{\DBase}[x \in (\xi(\theta), \theta')]. 
\end{align*}

For (C), we see that:
\begin{align*}
 (C)
 &=  \frac{1}{|\cU|} \sum_{u \in \cU \mid x_{\text{orig}}(u) \in [\max(\theta', \xi(\theta)), \theta]} \E[c(x_{\text{orig}}(u), \theta)]   \\
  &\le  \frac{1}{|\cU|} \sum_{u \in \cU \mid x_{\text{orig}}(u) \in [\max(\theta', \xi(\theta)), \theta]} \E[\min\left(c(\theta', \theta), c(\xi(\theta), \theta)  \right)]   \\
    &=  \frac{1}{|\cU|} \sum_{u \in \cU \mid x_{\text{orig}}(u) \in [\max(\theta', \xi(\theta)), \theta]} \E[\min\left(c(\theta', \theta), \gamma  \right)]   \\
 &= \min(c(\theta', \theta), \gamma) \cdot \mathbb{P}_{\DBase} [x \in [\max(\theta', \xi(\theta)), \theta]] 
\end{align*}
Putting this all together, we obtain that:
\[\frac{1}{|\cU|} \sum_{u \in \cU} \E[c(x(u), x_f(u))] \le  \gamma  p_{\mathrm{reach}}([\theta',\theta]) + \min(c(\theta', \theta), \gamma)\] for $p_{\mathrm{reach}}([\theta',\theta]):=\mathbb{P}_{\DBase} \left[x \in [\xi(\theta'), \xi(\theta)]\right]$ as desired.  
Since $\gamma  p_{\mathrm{reach}}([\theta',\theta]) + \min(c(\theta', \theta), \gamma)$ is decreasing in $\theta'$, this expression is maximized when $\theta' = \theta_{\text{min}}$. Thus we obtain an upper bound of
\[\gamma \cdot p_{\mathrm{reach}}([\theta_{\min},\theta]) + \min(c(\theta_{\text{min}}, \theta), \gamma)\,. \qedhere\]

\subsection{Proof of Proposition \ref{prop:zeroprofit}} 

The proof proceeds in two steps. First, we establish that $[\theta^*, \theta^*]$ is an equilibrium; next, we show that $[\theta, \theta]$ is not in equilibrium for $\theta \neq \theta^*$.

\paragraph{Establishing that $[\theta^*, \theta^*]$ is an equilibrium and $\mathcal{F}^+(\theta^*) = [\theta^*, \infty)$.}
First, we claim that $[\theta^*, \theta^*]$ is an equilibrium. At $[\theta^*, \theta^*]$, each participant chooses the first firm with $1/2$ probability. The expected utility earned by a firm is: 
\begin{align*}
  \frac{1}{2} \int_{\xi(\theta)}^{\infty} p_{\text{orig}}(x) (p(x) - (1-p(x)) \mathrm{d}x &=  \int_{\xi(\theta)}^{\infty} p_{\text{orig}}(x) (p(x) - 0.5) \mathrm{d}x \\
   &= \int_{\xi(\theta)}^{\infty} p_{\text{orig}}(x) p(x) \mathrm{d}x  - 0.5 \int_{\xi(\theta)}^{\infty} p_{\text{orig}}(x) \mathrm{d}x  \\
   &= \int_{\xi(\theta)}^{\infty} p_{\text{orig}}(x) \mathrm{d}x \left(\frac{\int_{\xi(\theta)}^{\infty} p_{\text{orig}}(x) p(x) \mathrm{d}x}{\int_{\xi(\theta)}^{\infty} p_{\text{orig}}(x) \mathrm{d}x} - \frac{1}{2}\right) \\
   &= \left( \int_{\xi(\theta)}^{\infty} p_{\text{orig}}(x) \mathrm{d}x \right) \left(\E_{(x,y) \sim \DBase} [y = 1 \mid x \ge \xi(\theta)] - \frac{1}{2} \right) \\
   &= 0\,. 
\end{align*}

If the firm chooses $\theta > \theta^*$, then since the cost function is strictly monotonic in its second argument, participants either are indifferent between $\theta$ and $\theta^*$ or prefer $\theta$ to $\theta^*$. Due to the tie breaking rule, the firm will thus lose all of its participants and incur $0$ utility. Thus the firm has no incentive to switch to $\theta$. 

If the firm chooses $\theta < \theta^*$, then it will gain all of the participants. The firm's utility will be:
\begin{align*}
  &\int_{\xi(\theta)}^{\infty} p_{\text{orig}}(x) (p(x) - (1-p(x)) \mathrm{d}x \\
&\qquad\qquad= \int_{\xi(\theta)}^{\xi(\theta^*)} p_{\text{orig}}(x) (p(x) - (1-p(x)) \mathrm{d}x + \int_{\xi(\theta^*)}^{\infty} p_{\text{orig}}(x) (p(x) - (1-p(x)) \mathrm{d}x  \\
  &\qquad\qquad= 2 \int_{\xi(\theta)}^{\xi(\theta^*)} p_{\text{orig}}(x) (p(x) - 0.5) \mathrm{d}x.
\end{align*}
It is not difficult to see that at $\theta^*$, it must hold that $p(\xi(\theta^*)) \le 0.5$. Since the posterior is strictly increasing, this means that $p(\xi(\theta)) < p(\xi(\theta^*)) = 0.5$, so the above expression is negative. This means that the firm will not switch to $\xi(\theta)$. 

Moreover, this establishes that $\mathcal{F}(\theta*) = [\theta^*, \infty)$. 

\paragraph{$[\theta, \theta]$ is not in equilibrium if $\xi(\theta^*)$ does not satisfy \eqref{eq:zeroprofit}.}
If $\theta < \theta^*$, then the firm earns utility 
\[\frac{1}{2} \left(\int_{\xi{\theta}}^{\infty} p_{\text{orig}}(x) (p(x) - (1-p(x))\right) \mathrm{d}x\,, \] 
which we already showed above was negative. Thus, the firm has incentive to change their threshold to above $\theta$ so that it loses the full participant base and gets $0$ utility. 

If $\theta > \theta^*$, then the firm earns  utility 
\[U = \frac{1}{2} \left(\int_{\xi(\theta)}^{\infty} p_{\text{orig}}(x) (p(x) - (1-p(x))\right) \mathrm{d}x\,, \] which is strictly positive. Fix $\epsilon > 0$, and suppose that the firm changes to a threshold $\theta'$ such that $c(\theta', \theta) = \epsilon$. Then it would gain all of the participants and earn utility: 
\begin{align*}
 \int_{\xi(\theta')}^{\infty} p_{\text{orig}}(x) (p(x) - (1-p(x)) \mathrm{d}x &= \int_{\xi(\theta')}^{\xi{\theta}} p_{\text{orig}}(x) (p(x) - (1-p(x)) \mathrm{d}x \\
 &\qquad\qquad+ \int_{\xi(\theta)}^{\infty} p_{\text{orig}}(x) (p(x) - (1-p(x)) \mathrm{d}x   \\
 &= \int_{\xi(\theta')}^{\xi(\theta)} p_{\text{orig}}(x) (p(x) - (1-p(x)) \mathrm{d}x + 2 U.
\end{align*}
We claim that this expression approaches $2U$ as $\epsilon \rightarrow 0$. To see this, note that $c(\xi(\theta'), \theta) \rightarrow \gamma$ and so $\xi(\theta')\rightarrow \xi(\theta)$ as $\epsilon \rightarrow 0$. This implies that $\int_{\xi(\theta')}^{\xi(\theta)} p_{\text{orig}}(x) (p(x) - (1-p(x)) \mathrm{d}x  \rightarrow 0$ as desired. Thus, the expression approaches $2U > U$ as desired. This means that there exists $\epsilon$ such that the firm changing to $\theta'$ results in a strict improvement in utility.

\subsection{ Proof of Theorem~\ref{thm:swapping}}
\label{app:ddd}
 Recall the definition of the action set $\mathcal S$. We prove Theorem~\ref{thm:swapping} by constructing a $s_\mathrm{swap}\in\mathcal S$ and relating the effect of a change in the score function from $s_\text{curr}$ to $s_\mathrm{swap}$ to the causal effect of position.

For $u\in\cU$ let $i_1(u)$ and $i_2(u)$ denote the index of the content item shown to user $u$ under $s_\text{curr}$ in the first and second display slot, respectively. 
Now, let the score function $s_\mathrm{swap}$ be such that the content items displayed in the first two display slots are swapped relative to $s_\text{curr}$, simultaneously for all users $u\in\cU$:  
\begin{equation}s_\mathrm{swap}(u)[i]=\begin{cases}s_\text{curr}(u)[i_2(u)] & i=i_1(u)\\ s_\text{curr}(u)[i_1(u)] & i=i_2(u) \\ s_\text{curr}(u)[i]& \text{otherwise}.\end{cases}\end{equation}

It holds that $s_\mathrm{swap}\in\mathcal S$, since $|s_\text{curr}(u)[i_1(u)]- s_\text{curr}(u)[i_2(u)]|\leq\delta$ for all $u\in\cU$.  We lower bound performative power as
\begin{align}\label{eq:swapping}
\PP = \sup_{s\in\mathcal S}\frac 1 {|\cU|}\sum_{u \in \cU} \E\left[\|z(u) - z_{{s}}(u)\|_1 \right]
\geq \frac 1 {|\cU|}\sum_{u \in \cU} \E\left[\|z(u) - z_{s_\mathrm{swap}}(u)\|_1 \right]
\end{align}

To bound the difference between the counterfactual variable $z(u)$ and $z_{s_\mathrm{swap}}(u)$, we decompose $s_\mathrm{swap}$ into a series of unilateral \textit{swapped} score functions, one for each viewer. 
The score function $s_\mathrm{swap}^u$ associated with viewer~$u$ swaps the scores of content that currently appears in the first two display slots for viewer $u$ and keeps the scores of the other viewers unchanged. 

Assumption~\ref{assumption:independence} implies that $z_{s_\mathrm{swap}}(u)=z_{s_\mathrm{swap}^u}(u)$, since there are no peer effects; $z_{s_\mathrm{swap}}(u)$ is independent of $s_\mathrm{swap}(u')$ for $u'\neq u$. Thus, we can aggregate the unilateral effects across all viewers $u\in\cU$ to obtain the effect of $s_\mathrm{swap}$ as:
\begin{align}
\PP &\geq \frac 1 {|\cU|}\sum_{u \in \cU} \E\left[\|z(u) - z_{s_\mathrm{swap}}(u)\|_1 \right]= \frac 1 {|\cU|}\sum_{u \in \cU} \E\left[\|z(u) - z_{s_\mathrm{swap}^u}(u)\|_1 \right].
\label{eq:uni}
\end{align}
Reasoning about unilateral effects allows us to relate the summands in \eqref{eq:uni} to the causal effect of position. In particular, focus on coordinate $i_1(u)$ in the norm, and let $Y_0(u)=z(u)[i_1(i)]$ and $Y_1(u)=z_{s_\mathrm{swap}^u}(u)[i_1(u)]$. Then, we have
\begin{align*}
\PP &\geq \frac 1 {|\cU|}\sum_{u \in \cU} {\E |z(u)[i_1] - z_{s_\mathrm{swap}^u}(u)[i_1]|} = \frac 1 {|\cU|}\sum_{u \in \cU} {\E |Y_0(u) - Y_1(u)|} = \beta.
\end{align*}
where the causal effect of position $\beta$ is defined as in Definition~\ref{def:display-effect}.

\subsection{Proof of Proposition~\ref{prop:1d}}

The upper bound follows from Corollary \ref{cor:perfpower}. For the lower bound, we take $f$ to be the threshold classifier given by $\thetaSL + \Delta \gamma$. We see that for $x_{\text{orig}}(u) \in [\theta_{\text{SL}}, \theta_{\text{SL}} + \Delta\gamma]$, it holds that $x_f(u) = \theta_{\text{SL}} + \Delta \gamma$ and $x(u) = x_{\text{orig}}(u)$. This means that the performative power is at least:
\begin{align*}
    \PP &=  \frac{1}{|\cU|} \sum_{u \in \cU} \E[\distance(x(u), x_f(u))] \\
    &= \frac{1}{|\cU|} \sum_{u \in \cU} \E[|x(u) - x_f(u)|] \\
    &\ge \frac{1}{|\cU|} \sum_{u \in \cU} I[x_{\text{orig}}(u) \in [\theta_{\text{SL}}, \theta_{\text{SL}} + \Delta\gamma]] \E[|\theta_{\text{SL}} + \Delta \gamma - x_{\text{orig}}(u)|] \\
    &\ge   \frac{1}{|\cU|} \sum_{u \in \cU} I[x_{\text{orig}}(u) \in [\theta_{\text{SL}}, \theta_{\text{SL}} + \frac12 \Delta\gamma]] \cdot \frac12 \Delta \gamma \\
    &\ge \frac12 \Delta \gamma \Pr_{\DBase}[x \in [\theta_{\text{SL}}, \theta_{\text{SL}} + \frac12 \Delta\gamma]],
\end{align*}
as desired. 
\end{document}